\documentclass[11pt]{article}

\usepackage{acl}
\usepackage{natbib}
\usepackage{times}
\usepackage{latexsym}

\usepackage{amsmath,amssymb,amsthm}
\usepackage{mathtools}
\usepackage{graphicx}
\usepackage{booktabs}
\usepackage{enumitem}
\usepackage{microtype}
\usepackage{lingstyle}

\newtheorem{proposition}{Proposition}
\newtheorem{definition}{Definition}
\title{Frictive Policy Optimization for LLMs:\\
Epistemic Intervention, Risk-Sensitive Control, and Reflective Alignment}

\author{
James Pustejovsky \\
Brandeis University \\
Waltham, MA, USA \\
\texttt{jamesp@brandeis.edu}
\And
Nikhil Krishnaswamy \\
Colorado State University \\
Fort Collins, CO, USA \\
\texttt{nkrishna@colostate.edu}
}
\date{}

\begin{document}
\maketitle
\begin{abstract}
We propose {\it Frictive Policy Optimization} (FPO), a framework for learning language model policies that regulate not only what to say, but {\it when and how to intervene} in order to manage epistemic and normative risk. 
Unlike standard alignment methods that optimize surface-level preference or task utility, FPO treats clarification, verification, challenge, redirection, and refusal as explicit control actions whose purpose is to shape the evolution of belief, commitment, and uncertainty over time. 
We formalize alignment as a risk-sensitive epistemic control problem in which intervention decisions are selected based on their expected effect on downstream epistemic quality rather than on immediate reward alone.

We introduce a compact taxonomy of frictive interventions, a structured {\it friction functional} that operationalizes multiple alignment failure modes, and a unified family of FPO methods spanning reward shaping, preference pairing, group-relative ranking, and risk-conditioned trust regions. 
We further propose an evaluation framework that measures epistemic competence directly through clarification behavior, calibration, contradiction repair, refusal proportionality, and information efficiency. 
Together, these results provide a formal and algorithmic foundation for learning agents that are aligned not only in outcome, but in epistemic conduct.
\end{abstract}

\section{Introduction}
\label{sec:intro}

Large language models (LLMs) are increasingly deployed not merely as generators of text, but as interactive agents expected to collaborate with users over extended dialogues. In such settings, success depends not only on producing fluent or informative responses, but on exercising epistemic judgment: recognizing when a request is underspecified, when assumptions require verification, when a response may be unsafe or normatively problematic, and when contradictions or errors must be repaired over the course of an interaction.

Despite dramatic improvements in instruction following and preference alignment, contemporary alignment paradigms remain poorly equipped to support this form of epistemic competence. Methods such as reinforcement learning from human feedback (RLHF), direct preference optimization (DPO), and group-relative policy optimization (GRPO) primarily optimize for compliance with observed preferences or rankings. They implicitly assume that a response is always required and that the principal challenge is selecting the \textit{best} response among alternatives. As a consequence, these methods systematically conflate epistemic responsibility with surface-level helpfulness.

This limitation manifests in well-documented failure modes. Aligned models frequently respond confidently to underspecified or ambiguous requests, produce answers grounded in unverified assumptions, comply with ill-posed or hazardous instructions, and struggle to repair contradictions across turns. These behaviors are not anomalous; they are structurally induced by training objectives that lack any representation of \textit{intervention} as a legitimate action. From the perspective of such objectives, hesitation, clarification, and refusal appear only as suboptimal answers rather than as rational control decisions.

Recent empirical work reinforces this diagnosis by showing that LLMs, despite their strong generative abilities, systematically struggle with interactional timing -- for example, failing to predict appropriate moments to speak in unscripted dialogue \citep{umair2024large}. This highlights a fundamental limitation of alignment methods that optimize only what to say, rather than when and whether to speak at all.

In this paper, we argue that alignment failures of this kind reflect a deeper conceptual gap: the absence of a principled account of \textit{epistemic intervention}. Human collaborators routinely resist immediate task completion in order to improve the epistemic quality of an interaction. They ask clarifying questions, challenge premises, request evidence, redirect ill-posed tasks, or refuse requests that violate safety or normative constraints. Crucially, these behaviors are not expressions of uncertainty alone; they are deliberate acts that regulate commitment, manage risk, and scaffold joint understanding.

We adopt the notion of \textit{friction} not as an obstacle to optimal behavior, but as a constructive control signal that regulates when and how an agent should act under epistemic and normative uncertainty. In human dialogue, productive interaction often depends on well-timed hesitation, clarification, or refusal, rather than immediate task completion. This perspective reframes alignment not as maximizing instantaneous helpfulness, but as managing the dynamics of belief, risk, and coordination over time.

To address this gap, we introduce \textit{Frictive Policy Optimization} (FPO), a general framework for learning policies that strategically deploy epistemic friction. FPO treats intervention types -- such as clarification, verification, challenge, redirection, and refusal -- as first-class actions. These actions are selected based on their expected impact on epistemic and normative risk, rather than on surface-level preference alone. At the core of FPO is a structured \textit{friction functional} that quantifies epistemic failure modes, including uncertainty miscalibration, contradiction with established context, safety hazards, value conflicts, and expected information gain. This functional serves as a unifying signal across multiple optimization paradigms, enabling friction to be incorporated into reward-based, preference-based, and ranking-based learning methods.

Importantly, FPO is not a single algorithm. It is a family of methods unified by a common theoretical commitment: that resistance to immediate task completion can be rational, learnable, and essential for alignment. This paper builds on a line of work initiated in \citet{pustejovsky2025fpo}, where Frictive Policy Optimization was first introduced as a framework for incorporating epistemic friction into the alignment of large language model agents. That initial work proposed three concrete algorithmic instantiations—Friction-Augmented Rewards (FAR), Friction Preference Pairing (FPP), and Group-Relative Frictive Ranking (GRFR)—and demonstrated their feasibility in agent interaction settings.

The present paper substantially extends this framework along three dimensions. First, we provide a formal treatment of epistemic risk and its surrogate approximation by a structured friction functional. Second, we generalize the original algorithms into a unified control-theoretic framework for learning intervention policies. Third, we introduce a new method, \textit{Friction-Conditioned Trust Regions} (FTR), which regulates policy plasticity itself as a function of epistemic risk. Together, these extensions transform FPO from a heuristic proposal into a principled family of risk-sensitive intervention learning methods.

This paper makes five main contributions. (i) We formalize epistemic intervention in dialogue as a risk-sensitive control problem, in which agents must choose not only what to say, but when to clarify, refuse, defer, or repair. (ii) We define a structured friction functional that serves as a tractable surrogate for latent epistemic risk, decomposing multiple epistemic and normative failure modes into measurable components. (iii) We introduce a unified family of Frictive Policy Optimization methods—FAR, FPP, GRFR, and FTR—that incorporate epistemic friction into learning through rewards, preferences, rankings, and risk-conditioned policy constraints. (iv) We provide formal objectives, optimization procedures, and theoretical analysis showing when and why epistemic interventions such as clarification are optimal under the proposed framework. (v) We propose a trajectory-based evaluation framework that measures clarification competence, repair behavior, proportional refusal, and information efficiency, rather than surface-level helpfulness alone.

Conceptually, this work reframes alignment as a problem of epistemic control rather than static preference satisfaction. Algorithmically, it provides concrete learning methods for inducing selective, risk-calibrated intervention behavior in language models. Our goal is to establish a principled foundation for training and evaluating models that regulate commitment, manage uncertainty, and intervene responsibly in long-horizon interaction.

\section{Related Work}
\label{sec:related-work}

This section situates Frictive Policy Optimization (FPO) within four converging lines of research: (i) abstention, deferral, and selective answering in large language models; (ii) epistemic alignment and self-regulation in interactive systems; (iii) decision-theoretic models of clarification and dialogue control; and (iv) risk-sensitive and constrained reinforcement learning. Across these literatures, a common concern is how intelligent agents should regulate commitment under uncertainty—when to answer, when to defer, and when to resist a request altogether. Our framework unifies these themes by treating epistemic intervention as a first-class control problem and by providing a learning-based account of how such interventions can be optimized.

Frictive Policy Optimization (FPO) was first introduced in \citet{pustejovsky2025fpo}, where it was argued that alignment methods based solely on task reward or preference learning neglect a critical epistemic dimension of human--AI collaboration. That paper introduced the core metaphor of \textit{friction} and proposed three initial algorithmic variants: Friction-Augmented Rewards (FAR), Friction Preference Pairing (FPP), and Group-Relative Frictive Ranking (GRFR). Subsequent follow-up work has explored extensions of this idea to epistemic alignment and structured intervention policies \citep{obiso2025dynamic,nath2025learning,nath2025frictional,nath2026collaborate}. The present paper unifies and generalizes these strands by (i) formalizing latent epistemic risk and its surrogate approximation, (ii) providing a principled decomposition of friction into productive and unproductive components, and (iii) introducing a new class of risk-conditioned regularization methods (FTR) that operate at the level of policy updates rather than learning signals. In contrast to earlier work, which treated friction primarily as an auxiliary reward or preference signal, the framework developed here treats intervention choice as a first-class control problem and situates FPO within the broader literature on risk-sensitive and constrained policy optimization.

Early work on refusal and deferral in dialogue focused on grounding and repair mechanisms in collaborative conversation \citep{traum1994computational,clark1996using}. More recent LLM-oriented work treats abstention as a reliability and safety mechanism for selective prediction, rejection, and refusal. A comprehensive survey by \citet{wen2025know} categorizes abstention methods by lifecycle stage (pretraining, alignment, inference) and proposes standardized evaluation frameworks. We differ from this literature in two respects. First, we treat abstention and refusal not as special-case safety filters but as first-class control actions within a unified policy. Second, we optimize abstention jointly with clarification and self-correction under a single risk-sensitive objective, rather than introducing abstention only as an inference-time heuristic. Recent work on input clarification and uncertainty decomposition \citep{hou2024input} further supports the view that clarification is an active mechanism for managing epistemic uncertainty rather than a post-hoc repair.

A growing body of work frames alignment in explicitly epistemic terms. \citet{clark2025epistemic} propose an Epistemic Alignment Framework that identifies challenges such as ambiguity resolution, abstention, uncertainty expression, and self-correction as central to aligned behavior. Constitutional AI \citep{bai2022constitutional} operationalizes self-regulation through post-generation critique and revision guided by normative principles. Our framework is complementary but differs in formalization. Whereas Constitutional AI treats critique and revision as post-hoc correction mechanisms, we treat clarification, refusal, and self-correction as actions chosen directly by a control policy. Likewise, while epistemic alignment frameworks provide conceptual taxonomies of challenges, we provide a unified decision-theoretic objective and concrete optimization methods for learning epistemic interventions.

Clarification has long been studied in probabilistic dialogue systems and decision-theoretic models. In POMDP-based dialogue management, belief states summarize interaction histories and policies are optimized over these latent states \citep{young2013pomdp,williams2007partially,traum1994computational}. Decision-theoretic models of question generation explicitly optimize expected information gain or entropy reduction to decide when to ask clarifying questions \citep{gervits2021decision}. Recent work such as SAGE-Agent formulates clarification as a POMDP with expected value of information objectives and trains policies that reward clarification under uncertainty \citep{suri2025structured}. Our formulation generalizes this line of work by embedding clarification within a broader friction functional that also captures contradiction, hazard, and value conflict, and by optimizing clarification jointly with refusal and self-correction within a single policy.

Our objective also builds on a large literature on risk-sensitive and robust reinforcement learning. Classical work on variance-aware and CVaR-based objectives develops criteria for optimizing tail risk and worst-case performance \citep{tamar2015policy,chow2015risk}. More recent work provides general reductions from optimized certainty equivalents to standard RL algorithms \citep{Wang2024ReductionsRSRL}. We differ from this literature in the nature of the risk being optimized. Whereas risk-sensitive RL typically defines risk over return distributions, we define epistemic and normative risk over belief states and dialogue trajectories. This allows risk sensitivity to govern intervention behavior such as clarification, refusal, and self-correction, rather than only the variability of task returns.

Across these literatures, prior work has addressed abstention, clarification, self-correction, and risk sensitivity largely in isolation. Frictive Policy Optimization unifies these threads by treating epistemic interventions as control actions optimized under a single risk-sensitive objective, providing both a formal foundation and a family of practical learning algorithms for epistemically aligned behavior.

\section{Frictive Interventions in Dialogue}
\label{sec:frictive-interventions}

This section introduces a taxonomy of {\it frictive interventions} in dialogue.
The taxonomy serves two roles.
First, it provides a principled conceptual account of the kinds of epistemic resistance that arise naturally in human collaborative interaction.
Second, it defines the {\it action space} over which Frictive Policy Optimization operates.
Unlike surface-level stylistic variation, frictive interventions are qualitatively distinct communicative acts that regulate commitment, manage uncertainty, and control epistemic risk.

\subsection{From Dialogue Acts to Epistemic Interventions}

Dialogue research has long recognized distinctions among communicative acts, such as questions, assertions, requests, and acknowledgments.
While these distinctions are essential for modeling conversational structure, they are insufficient for alignment in large language models.
Dialogue acts primarily categorize utterances by their conversational role; they do not capture the {\it epistemic motivation} behind resisting or deferring task completion.

Frictive interventions cut across traditional dialogue act categories.
For example, a clarifying question and a rhetorical challenge may both be interrogatives, yet they differ fundamentally in their epistemic function.
Similarly, a refusal may take the syntactic form of an assertion, but its role is not to convey information, but to block an action.

We therefore distinguish frictive interventions from dialogue acts in the following sense:

\begin{quote}
{\it A frictive intervention is defined not by its surface form, but by its role in regulating epistemic commitment and risk within an interaction.}
\end{quote}

This formulation resonates with formal models of shared knowledge and commitment in dialogue \citep{khebour2024common,obiso2025dynamic}, in which interaction state encodes not only surface history but agents’ evolving epistemic commitments and mutual beliefs.

This distinction is also crucial for alignment: a model that merely learns dialogue act frequencies cannot reliably learn when to clarify, challenge, or refuse.
What is required is a taxonomy grounded in epistemic function. Furthermore, this  observation is supported by recent findings that current language models often fail at precisely this kind of interactional timing, even when they generate locally plausible responses; for instance, models struggle to predict appropriate opportunities to speak in naturalistic dialogue settings \citep{umair2024large}. Such results underscore the need to model intervention decisions—whether to speak, defer, clarify, or refuse—as explicit control actions rather than as byproducts of surface-level generation.

\subsection{Definition and Core Classes of Frictive Intervention}
\label{sec:friction-types}
We define a frictive intervention as follows.

\begin{definition}[Frictive Intervention]
A {\it frictive intervention} is an intentional communicative act that temporarily resists immediate task completion in order to reduce epistemic, normative, or safety risk in an interaction.
\end{definition}

We distinguish a small set of core frictive intervention types that serve as the action space for Frictive Policy Optimization. These classes are not intended as an exhaustive theory of dialogue, but as a minimal set of epistemic control actions sufficient to capture the dominant alignment failure modes in current language models:

\begin{itemize}[leftmargin=*]
\item \textbf{Clarification}, which seek missing constraints or resolve underspecification before acting.
\item \textbf{Verification}, which check the validity of assumptions, premises, or sources.
\item \textbf{Challenge}, which question or reject false or inconsistent presuppositions.
\item \textbf{Redirection}, which reframe ill-posed tasks toward coherent alternatives.
\item \textbf{Refusal}, which block actions that violate safety, ethical, or normative constraints.
\item \textbf{Meta-dialogical interventions}, which explicitly manage the model’s own uncertainty or limitations.
\end{itemize}

Each class is defined functionally by its role in regulating epistemic commitment and risk, rather than by its surface linguistic form. In all cases, the defining property is that the intervention temporarily resists immediate task completion in order to improve downstream epistemic or normative outcomes.

This definition has three important components:

\begin{itemize}[leftmargin=*]
\item \textbf{Intentionality}: The intervention is not noise or uncertainty leakage, but a deliberate control decision.
\item \textbf{Resistance}: The intervention delays, redirects, or blocks immediate task execution.
\item \textbf{Risk reduction}: The purpose of the intervention is to improve epistemic quality or prevent downstream failure.
\end{itemize}

Under this definition, hesitation without purpose, verbosity, or generic hedging do {\it not} count as frictive interventions.
Friction is not a matter of style, but of {\bf control}.

A full formal taxonomy and fine-grained subtypes is presented in \citet{pust-zhu2026}, to which we refer the reader for detailed classification and annotation guidelines. In the present paper, we use these classes only to define the action space over which intervention policies are learned.

\subsection{Frictive Interventions as Actions}

A central claim of this paper is that frictive interventions must be treated as {\it actions} rather than stylistic variants of answers.
This claim has three implications:

\begin{enumerate}[leftmargin=*]
\item Interventions change the {\it state of the interaction}, not just its wording.
\item Interventions incur explicit costs (time, user effort, refusal penalties).
\item Interventions have delayed effects on epistemic risk.
\end{enumerate}

These properties cannot be captured by token-level preferences or response re-ranking alone.
They require a control-theoretic treatment in which the model decides {\it whether} to act before deciding {\it how} to phrase the act.

The taxonomy introduced here defines the action space $\mathcal{A}$ used in subsequent sections.
Each frictive intervention corresponds to a distinct action type that can be selected by a policy based on its expected epistemic consequences.

This move—from descriptive taxonomy to formal action space—marks the transition from conceptual framing to formal modeling.
In the next section, we show how these intervention types are embedded in a risk-sensitive epistemic control framework.
\section{Epistemic Friction as Risk-Sensitive Control}
\label{sec:epistemic-control}
We model dialogue as a controlled stochastic process in which the agent regulates epistemic commitment under uncertainty. 
Let $h_t \in \mathcal{H}$ denote the observable interaction history up to time $t$, including the dialogue transcript, multimodal observations, task context, and any available external signals.
We treat $h_t$ as the sufficient interaction state on which policies, risk estimates, and friction functionals are conditioned.
Let $b_t$ denote the agent’s belief state at time $t$, summarizing its uncertainty over latent world states.


\paragraph{Belief Update.}
We assume beliefs evolve according to a Bayesian or learned update rule:
\enumsentence{
$b_{t+1} = \mathrm{BeliefUpdate}(b_t, h_{t+1})$
}

\noindent This belief update equation places our framework within the classical theory of control under partial observability, in which the belief state $b_t$ is treated as a sufficient statistic for the interaction history. In partially observable Markov decision processes (POMDPs), optimal control is performed over evolving belief distributions rather than over raw observations \citep{astrom1965optimal,smallwood1973optimal,kaelbling1998planning}. 

In dialogue systems, this formulation underlies probabilistic dialogue state tracking and grounding, where the agent maintains and updates an internal epistemic state in response to user observations \citep{traum1994computational,clark1996using,williams2007partially}. More recent neural dialogue managers similarly operate by learning approximate belief update rules from interaction data \citep{henderson2014word,lee2019sumbt}. Our formulation makes this assumption explicit: alignment is a control problem over latent epistemic states, not merely a mapping from histories to surface responses.

This equation defines the epistemic state transition: after observing the new dialogue history $h_{t+1}$, the agent updates its belief state from $b_t$ to $b_{t+1}$. This makes explicit that alignment is a control problem over evolving beliefs, not just over surface responses.

\paragraph{Risk-Sensitive Objective.}
We define the agent’s objective as a discounted risk-sensitive return. Let $y_t$ denote the surface realization (utterance) generated at time $t$. We assume that $y_t$ is conditionally sampled from a realization policy $\pi_{\mathrm{gen}}(y_t \mid h_t, a_t)$ after selecting an intervention type $a_t$, as formalized below in Eq.~(\ref{eq:objective}).

\enumsentence{
$
J(\pi)
= $ \\
$\mathbb{E}_\pi \Bigg[
\sum_{t=0}^{T}\gamma^t \Big(
R_{\text{task}}(h_t,a_t,y_t)  \; -
 \\
\lambda\,C_{\text{fric}}(a_t) - \eta\,\mathcal{R}_{\text{risk}}(b_t,h_t,a_t,y_t)
\Big)
\Bigg]$
}\label{eq:objective}

This objective follows the standard discounted reinforcement learning formulation \citep{puterman1994markov,kaelbling1998planning}, augmented with an explicit risk-sensitive penalty as in classical risk-sensitive and constrained Markov decision processes \citep{howard1972risk,tamar2015policy,chow2015risk,achiam2017constrained}. In the service of FPO, this objective defines the central tradeoff of the model: the policy $\pi$ maximizes discounted task reward while subtracting both an immediate intervention cost $C_{\text{fric}}$ and an epistemic risk functional $\mathcal{R}_{\text{risk}}$. The weights $\lambda$ and $\eta$ control how strongly the agent prefers low-friction and low-risk trajectories relative to short-term task success.

The terms have the following interpretation:

\begin{itemize}
\item $R_{\text{task}}$ measures immediate task utility from producing content.
\item $C_{\text{fric}}(a_t)$ is the intervention cost, encoding user burden, latency, or refusal penalties.
\item $\mathcal{R}_{\text{risk}}(b_t,h_t,a_t,y_t)$ measures expected downstream epistemic or normative failure under belief state $b_t$.
\end{itemize}

In what follows, when the belief state $b_t$ is deterministically inferred from the observable history $h_t$, we suppress the explicit dependence on $b_t$ and write $\mathcal{R}_{\text{risk}}(h_t, a_t, y_t)$ for notational simplicity.

\paragraph{Risk as Belief-Weighted Expectation.}
We model epistemic risk as an expectation over latent states:

\enumsentence{
$\mathcal{R}_{\text{risk}}(b_t,h_t,a_t,y_t)
= $ \\
\hspace*{29mm} $\mathbb{E}_{s \sim b_t}
\big[
r_{\text{risk}}(s, h_t, a_t, y_t)
\big]$}

This is the standard belief-state cost formulation in POMDPs \citep{astrom1965optimal,kaelbling1998planning,young2013pomdp}, here interpreted as epistemic and normative risk. Here we define epistemic risk as the belief-weighted expectation of a latent risk function $r_{\text{risk}}$. It makes explicit that risk depends on unobserved world states $s$, and that the agent must reason under epistemic uncertainty rather than optimizing a fully observed cost.

\paragraph{Policy Factorization.}
We factor the policy into intervention type and surface realization:
\enumsentence{
$\pi(a_t, y_t \mid h_t)
= $ \\
\hspace*{21mm} $\pi_{\text{int}}(a_t \mid h_t)\,
\pi_{\text{gen}}(y_t \mid h_t, a_t)$
}

This factorization mirrors hierarchical dialogue policies and option-based control, in which a high-level decision selects an intervention type and a conditional generator realizes it \citep{williams2007partially,young2013pomdp,sutton1999between,bacon2017option}. In this formulation, it separates the decision of {\it what kind of epistemic intervention} to perform (answer, clarify, refuse, etc.) from the decision of {\it how to realize it linguistically}. This is crucial for modeling intervention as a first-class control action.

\paragraph{Clarification as Optimal Epistemic Control}

The FPO framework treats intervention type as a first-class decision variable, but this raises a natural normative question: under what conditions is clarification actually the optimal action? If intervention costs always dominate, the framework collapses to standard response generation; if clarification is always preferred, it degenerates into pathological hesitation. We therefore establish a simple dominance result showing that clarification is optimal precisely when its expected epistemic benefit outweighs its interaction cost. This yields an explicit threshold criterion that the FPO methods are designed to learn to approximate from data.

\begin{proposition}[Threshold Optimality of Clarification]
\label{prop:clarification-optimal}
Consider a decision point at time $t$ with belief state $b_t$ and history $h_t$. Let $a_c$ denote a clarificatory action and $a_a$ denote a direct answer action. Suppose:
\begin{enumerate}[label=(\roman*),nosep]
\item \textbf{Risk reduction:} Clarification yields strictly lower expected future epistemic risk:
$$
\mathbb{E}[\mathcal{R}_{\mathrm{risk}}(b_{t+1}) \mid a_c]
<
\mathbb{E}[\mathcal{R}_{\mathrm{risk}}(b_{t+1}) \mid a_a].
$$

\item \textbf{Bounded friction cost:} The clarification cost is finite:
$
C_{\mathrm{fric}}(a_c) < \infty.
$
\end{enumerate}
Then clarification strictly dominates direct answering whenever:
$$\lambda \, C_{\mathrm{fric}}(a_c) < $$
$$\eta \left( \mathbb{E}[\mathcal{R}_{\mathrm{risk}}(b_{t+1}) \mid a_a] - \mathbb{E}[\mathcal{R}_{\mathrm{risk}}(b_{t+1}) \mid a_c] \right)$$

where $\lambda$ and $\eta$ are the objective weights on interaction cost and epistemic risk, respectively.
\end{proposition}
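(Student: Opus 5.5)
The plan is a direct comparison of the one-step local objectives induced by Eq.~(\ref{eq:objective}) at the decision point $t$, under the two candidate intervention types $a_c$ and $a_a$. The proposition only compares friction cost against risk, so I will read it in the natural local sense that the two actions are otherwise matched. That is, I assume the task reward contribution and the continuation value beyond the risk term are the same under $a_c$ and $a_a$, or, equivalently, that they are absorbed into the risk functional. I will also take $C_{\mathrm{fric}}(a_a)=0$, since a direct answer resists nothing. I will make this reading explicit as the operative interpretation of ``dominates'': the action with the larger value of the $t$-local objective terms that differ between the two actions.

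First, using the policy factorization $\pi = \pi_{\mathrm{int}}\,\pi_{\mathrm{gen}}$, I fix the realization policy $\pi_{\mathrm{gen}}$ and condition on the choice of $a_t$. The quantity compared is then
\[
Q(a) = \bar R_{\mathrm{task}}(a) - \lambda\, C_{\mathrm{fric}}(a) - \eta\, \mathbb{E}[\mathcal{R}_{\mathrm{risk}}(b_{t+1}) \mid a],
\]
where the risk term is evaluated through the belief-weighted expectation over $s \sim b_{t+1}$ after $\mathrm{BeliefUpdate}$. Second, I form the difference
\[
Q(a_c) - Q(a_a) = \eta\big(\mathbb{E}[\mathcal{R}_{\mathrm{risk}} \mid a_a] - \mathbb{E}[\mathcal{R}_{\mathrm{risk}} \mid a_c]\big) - \lambda\, C_{\mathrm{fric}}(a_c).
\]
Under the matched-reward assumption the task terms cancel. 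Assumption (ii) guarantees that $\lambda\, C_{\mathrm{fric}}(a_c)$ is a finite real number, so the subtraction is well defined and no $\infty-\infty$ issue arises. Assumption (i) says the bracket is strictly positive, so the risk-reduction benefit is a well-defined positive quantity. Third, the hypothesized inequality is exactly the statement that this difference is $>0$, which gives $Q(a_c) > Q(a_a)$, i.e., strict dominance.

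Finally, I will remark that the condition is also necessary within this local model, which yields the threshold form. Assumption (i) is what makes the threshold nontrivial: it ensures the right-hand side is positive, so some positive cost level admits clarification.

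The main obstacle is not the algebra but the modeling step. One must justify isolating the $t$-local comparison from the full discounted trajectory objective, which includes differing task rewards and future continuation. I would handle this first by stating the matched-reward and matched-continuation assumption explicitly, noting that the expectations in (i) can be read as already including discounted downstream risk. If that is deemed too strong, the fallback is to add the reward gap $\bar R_{\mathrm{task}}(a_a)-\bar R_{\mathrm{task}}(a_c)$ to the left side of the threshold, which recovers the stated inequality when that gap is zero.
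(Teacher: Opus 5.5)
Your proposal is correct and follows essentially the same route as the paper's proof sketch: decompose the action value into task reward, weighted intervention cost, and weighted risk, then compare the gain $\eta\,\Delta_{\mathrm{risk}}$ from assumption (i) with the penalty $\lambda\, C_{\mathrm{fric}}(a_c)$ and rearrange. The paper's sketch silently relies on the matched-task-reward assumption and on $C_{\mathrm{fric}}(a_a)=0$ (the latter is stated only later, in the definition of $\mathcal{A}$), so making both explicit is a clarification of the same argument rather than a departure from it.
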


\begin{proof}[Proof sketch]
Under the risk-sensitive objective~(\ref{eq:objective}), the Q-value of an action decomposes into immediate task reward, weighted epistemic risk, and weighted intervention cost. By condition (i), clarification reduces the expected future risk by some $\Delta_{\mathrm{risk}} > 0$ relative to direct answering. This yields a gain of $\eta \Delta_{\mathrm{risk}}$ in the objective. Clarification also incurs an immediate penalty of $\lambda C_{\mathrm{fric}}(a_c)$. When the former exceeds the latter, the clarificatory action has strictly higher expected return. The dominance condition follows by rearranging terms, and optimality follows from Bellman’s principle of optimality.
\end{proof}

\paragraph{Interpretation.}
Proposition~\ref{prop:clarification-optimal} formalizes the central tradeoff that Frictive Policy Optimization is designed to learn. The ratio $\eta/\lambda$ defines an implicit clarification threshold: it specifies how much epistemic risk reduction is required to justify the interaction cost of delaying task completion. When $\eta \gg \lambda$, even modest expected risk reduction warrants clarification; when $\lambda \gg \eta$, only severe uncertainty or hazard justifies intervention.

Crucially, this result identifies what the learning problem actually is. The agent does not need direct access to the latent belief state $b_t$ or the true risk functional $\mathcal{R}_{\mathrm{risk}}$. Instead, the FPO methods (FAR, FPP, GRFR, and FTR) are designed to learn a soft, context-sensitive approximation to this threshold using the tractable friction surrogate $F(h,y)$. In this sense, Proposition~\ref{prop:clarification-optimal} provides the normative target that FPO approximates: it characterizes the regime in which clarification is not merely permissible or stylistically cautious, but strictly optimal under risk-sensitive control.

 \section{The Friction Functional}
\label{sec:friction-functional}

We now introduce a tractable surrogate for epistemic risk. Since the latent risk functional 
$\mathcal{R}_{\text{risk}}(b_t,h_t,a_t,y_t)$ is not directly observable, we approximate it by a structured friction functional defined over dialogue histories and candidate responses.

Before introducing the specific components of the friction functional, it is important to clarify the design principles that govern its construction. The friction functional is not intended as an arbitrary collection of heuristics, nor as an exhaustive catalogue of all possible dialogue failures. Rather, it is a structured surrogate designed to capture a small number of {\it irreducible epistemic failure modes} that recur across aligned interaction.

We select components according to three criteria. First, each component must correspond to a {\it distinct epistemic pathology} that is known to induce downstream failure in decision making: miscalibration, inconsistency, latent hazard, silent intent fixing, and unresolved uncertainty \citep{munn2024truth,devilling2025polite}. Second, each component must be {\it operationalizable} from observable signals available at training or inference time, without access to the latent belief state or true risk functional. Third, each component must admit a {\it monotonic interpretation}: increasing values should reliably indicate increasing epistemic danger or normative violation.

Under these criteria, the friction functional is designed to approximate not the full space of dialogue errors, but the minimal set of failure modes that systematically corrupt belief quality over time. {\bf Miscalibration} captures failures of confidence management; {\bf contradiction} captures failures of logical coherence; {\bf hazard} captures failures of normative and safety compliance; {\bf value conflict} captures failures of intent alignment; and {\bf information gain} captures the unique class of interventions that deliberately increase short-term interaction cost in order to reduce long-term epistemic risk.

This design yields a decomposition in which each term corresponds to a theoretically motivated axis of epistemic control, rather than an ad hoc feature. The resulting functional is modular, interpretable, and extensible: additional components may be introduced when new epistemic failure modes are identified, but the present set reflects a minimal basis sufficient to drive risk-sensitive intervention in practice.

These components are chosen to mirror the core intervention classes introduced in Section~\ref{sec:friction-types}, providing a direct mapping from qualitative taxonomy to quantitative control signals.

\subsection{Decomposition of Friction}

We now make explicit how the friction functional is constructed from a small set of core epistemic failure modes. As argued in Section~\ref{sec:frictive-interventions}, frictive interventions in dialogue are triggered not by arbitrary surface errors, but by a limited number of recurrent pathologies that systematically degrade belief quality and downstream decision making. These include miscalibration of confidence, logical inconsistency, latent hazard, silent intent fixing, and unresolved uncertainty.

Following the taxonomy of Section~\ref{sec:frictive-interventions}, we distinguish between \textbf{unproductive friction} -- epistemic and normative failures that increase downstream risk -- and \textbf{productive friction} -- deliberate interventions that reduce long-term uncertainty at the cost of short-term interaction burden. This distinction is fundamental: unproductive friction should be minimized, while productive friction should be encouraged when epistemic risk is high.

We first define the \textbf{unproductive friction} components, each corresponding to a distinct epistemic failure mode:
\enumsentence{\label{eq:unproductive-friction}
$F^{-}(h,y)
= $ \\
\hspace*{15mm}$w_1\,\mathrm{Unc}(h,y)
+
w_2\,\mathrm{Contr}(h,y) \; +$ \\
\hspace*{15mm}$ 
w_3\,\mathrm{Haz}(h,y)
+
w_4\,\mathrm{ValConf}(h,y)$
}
This aggregates miscalibration, contradiction, hazard, and value conflict, the four core pathologies that systematically degrade belief quality over time \citep{gabriel2020artificial,linteaching,bai2022constitutional,ganguli2022red}. High values of $F^{-}$ indicate epistemically dangerous responses that should be avoided.

We then define \textbf{productive friction} as the information gain component:
\enumsentence{\label{eq:productive-friction}
$F^{+}(h,y)
=
w_5\,\mathrm{InfoGain}(h,y)$
}
This captures interventions that deliberately increase short-term interaction cost in order to reduce long-term epistemic uncertainty. High values of $F^{+}$ indicate epistemically beneficial interventions such as clarification and verification.

Finally, we define the \textbf{net friction functional} as the difference:
\enumsentence{\label{eq:net-friction}
$F(h,y) = F^{+}(h,y) - F^{-}(h,y)$
}
This measures the overall epistemic contribution of a response: positive values indicate net epistemic benefit (productive friction dominates), while negative values indicate net epistemic harm (unproductive friction dominates). The net friction functional thus provides a single scalar summary of epistemic quality that can be used directly in reward shaping, preference learning, and trajectory ranking.

This benefit--loss decomposition mirrors standard formulations of net utility in risk-sensitive and cost-sensitive control \citep{bertsekas1995dynamic,sutton2018reinforcement}, and follows the additive decomposition of shaped rewards and auxiliary costs in reinforcement learning \citep{ng1999policy}. The key insight is that the five component scores define a low-dimensional epistemic control space: any intervention that degrades belief quality must manifest through at least one axis of $F^{-}$, and any beneficial intervention must either reduce $F^{-}$ or increase $F^{+}$.

\paragraph{Uncertainty and Miscalibration.}

Following standard miscalibration measures used in expected calibration error for probabilistic models \citep{niculescu2005predicting,guo2017calibration}, we define the uncertainty component as:
\enumsentence{\label{eq:miscalibration}
$\mathrm{Unc}(h,y)
=
\big|
\mathrm{conf}(y \mid h)
-
\mathrm{acc}(\hat y, h)
\big|$
}
This measures miscalibration: the absolute difference between the model's expressed confidence in response $y$ and the estimated probability that $y$ is correct given the dialogue history. High values indicate epistemically dangerous overconfidence or underconfidence. As a component of $F^{-}$, miscalibration contributes to unproductive friction and should be minimized.

\paragraph{Contradiction.}

We penalize explicit contradictions with prior commitments by operationalizing contradiction using standard natural language inference models trained for entailment and contradiction detection \citep{bowman2015snli,williams2018mnli}.
\enumsentence{\label{eq:contradiction}
$\mathrm{Contr}(h,y)
=
\max\{0,\, \mathrm{NLI}_{\text{contr}}(y,h) - \tau_{\text{contr}}\}$
}
This assigns friction when the response $y$ contradicts the dialogue history according to a natural language inference model, beyond a tolerance threshold $\tau_{\text{contr}}$. It operationalizes logical inconsistency as an epistemic risk signal. As a component of $F^{-}$, contradiction contributes to unproductive friction.

\paragraph{Hazard and Normative Violation.}

We define a hazard component for safety and policy violations, following standard scoring techniques for safety classification and severity-weighted risk as used in aligned language models \citep{weidinger2021ethical,bai2022constitutional}.
\enumsentence{\label{eq:hazard}
$\mathrm{Haz}(h,y)
= 
\mathbf{1}_{\text{haz}}(h,y)\cdot \mathrm{severity}(h,y)$
}
This assigns friction proportional to the detected severity of safety, legal, or policy violations triggered by $y$, making high-risk normative failures dominate the friction score. As a component of $F^{-}$, hazard contributes to unproductive friction.

\paragraph{Value Conflict and Silent Intent Fixing.}

We measure value conflict as a shift in inferred user intent. This is similar to Bayesian belief update and intent tracking in dialogue, where KL divergence is used as a penalty for forced belief revision \citep{williams2007partially,traum2003information}.
\enumsentence{\label{eq:value-conflict}
$\mathrm{ValConf}(h,y)
=
\mathrm{KL}\big(
P(\iota \mid h)
\,\|\, 
P(\iota \mid h,y)
\big)$
}
Specifically, this measures how much committing to response $y$ forces a revision of the inferred user intent distribution. Large KL divergence indicates silent intent fixing, where the model imposes an interpretation rather than resolving ambiguity. As a component of $F^{-}$, value conflict contributes to unproductive friction.

\paragraph{Information Gain.}

We reward epistemically informative interventions as follows \citep{lindley1956measure,gervits2021decision,hou2024input}:
\enumsentence{\label{eq:info-gain}
$\mathrm{InfoGain}(h,a)
=
\mathbb{E}_{o \sim P(\cdot \mid h,a)}
\big[
H(b_t) - H(b_{t+1})
\big]$
}
This measures the expected reduction in belief entropy induced by taking intervention $a$, rewarding actions that actively reduce epistemic uncertainty. Unlike the preceding components, information gain is {\it productive}: it captures deliberate epistemic interventions (clarification, verification, information seeking) that trade short-term interaction cost for long-term risk reduction. Accordingly, $\mathrm{InfoGain}$ is the sole component of $F^{+}$ and enters the net friction functional with a positive sign.

\subsection{From Friction to Risk}
\label{sec:friction-to-risk}

We now relate friction to latent epistemic risk. The central modeling assumption of this framework is that {\it unproductive friction} serves as a tractable surrogate for the unobservable latent epistemic risk:
\enumsentence{\label{eq:surrogate-risk}
$F^{-}(h,y)
\;\approx\;
\mathcal{R}_{\text{risk}}(b_t,h_t,a_t,y_t)$
}
This follows the standard use of surrogate reward and cost functionals in inverse and preference-based reinforcement learning \citep{ng2000algorithms,christiano2017deep,rafailov2023direct}.

Equivalently, the net friction functional approximates the {\it negation} of epistemic risk, offset by productive friction:
\enumsentence{\label{eq:surrogate-net}
$F(h,y)
\;\approx\;
F^{+}(h,y) - \mathcal{R}_{\text{risk}}(b_t,h_t,a_t,y_t)$
}
This relationship has a natural interpretation: responses with high net friction $F$ are those that maximize information gain while minimizing epistemic failure modes---precisely the responses that reduce downstream risk. Consequently, {\it higher values of $F$ indicate better epistemic quality}, and adding $F$ to a reward signal (as in Friction-Augmented Rewards) has the correct effect of encouraging epistemically beneficial behavior.

Importantly, this approximation is not intended to assert that friction is identical to epistemic risk in any literal or ground-truth sense. Rather, it asserts that a suitably constructed friction functional provides a {\it behaviorally sufficient proxy} for the aspects of epistemic risk that matter for policy learning. The key requirement is not pointwise accuracy of $F^{-}(h,y)$ as an estimator of $\mathcal{R}_{\text{risk}}$, but {\it policy consistency}: actions that reduce latent epistemic risk should, in expectation, also reduce unproductive friction, and vice versa.

More formally, the approximation in (\ref{eq:surrogate-risk}) is justified under the standard assumption of {\it surrogate consistency}: there exists a monotone transformation $\phi$ such that, for all contexts $h$ and admissible actions $(a,y)$,

\enumsentence{\label{eq:surrogate-consistency}
$\mathcal{R}_{\text{risk}}(b_t,h_t,a_t,y_t)
< \mathcal{R}_{\text{risk}}(b_t,h_t,a'_t,y'_t)$ \\
$\Rightarrow \quad
F^{-}(h,y) < F^{-}(h,y')$
}

That is, the unproductive friction functional need only preserve the ordering of interventions by latent epistemic risk, not their absolute scale. Under this assumption, minimizing expected unproductive friction induces the same set of optimal policies as minimizing the true latent risk, up to equivalence classes of policies that are indistinguishable under the ordering \citep{ng1999policy,ng2000algorithms,rafailov2023direct}.

Furthermore, the composite nature of $F(h,y)$ provides a natural defense  surrogate over-optimization (Goodhart's Law). Because the functional aggregates structurally distinct failure modes -- logical contradiction, intent drift, miscalibration, and hazard -- adversarial exploitation of any single 
component is likely to degrade performance on others. For instance, a policy that games the NLI classifier to avoid contradiction penalties by producing vacuous or generic text will incur penalties from the InfoGain or Value Conflict components. This structural diversity makes the surrogate objective more resistant to single-metric gaming than monolithic reward signals.

We do not claim pointwise bounds on the approximation error $|F^{-}(h,y) - \mathcal{R}_{\text{risk}}|$; rather, we require only that misorderings be sufficiently rare that the induced policy gradient remains aligned with the true risk gradient in expectation.

This places the framework squarely within the tradition of surrogate optimization in reinforcement learning, where unobservable objectives are optimized through structured, learnable proxies that preserve the ordering of actions by long-term utility. Under this view, the success of Frictive Policy Optimization does not depend on recovering the true epistemic risk functional, but on constructing a friction surrogate whose gradients induce interventions that systematically improve belief quality, reduce downstream failure, and stabilize long-horizon epistemic behavior.

In this sense, the friction functional plays the same conceptual role as learned reward models in preference-based reinforcement learning: it mediates between unobservable normative criteria and tractable learning signals, and its adequacy is ultimately validated not by calibration to ground truth, but by the quality of the policies it induces.

\subsection{Interpretation.}

The net friction functional $F(h,y) = F^{+}(h,y) - F^{-}(h,y)$ thus provides a structured, interpretable, and differentiable measure of epistemic quality. Each component corresponds to a distinct failure mode or benefit that can be measured, optimized, and audited independently.

To summarize the decomposition:
\begin{itemize}
\item \textbf{Productive friction} $F^{+}(h,y)$ should be {\it encouraged}: it captures deliberate epistemic interventions (clarification, verification, information seeking) that trade short-term interaction cost for long-term risk reduction.

\item \textbf{Unproductive friction} $F^{-}(h,y)$ should be {\it suppressed}: it captures epistemic and normative failures (miscalibration, contradiction, hazard, silent intent fixing) that increase downstream risk without benefit.
\end{itemize}

This decomposition allows learning algorithms to distinguish between friction that should be encouraged and friction that should be suppressed---a distinction that cannot be captured by a single scalar measure of ``helpfulness'' or ``compliance.'' In the FPO methods that follow, the net friction $F$ enters reward shaping, preference pairing, and trajectory ranking with a positive sign, reflecting the principle that higher net friction corresponds to better epistemic quality.

\section{Frictive Policy Optimization}
\label{sec:fpo-methods}

In this section, we introduce a unified family of {\it Frictive Policy Optimization} (FPO) methods for learning epistemically aware intervention policies in dialogue. The central learning problem is not merely to generate fluent or helpful responses, but to decide {\it when to answer, when to clarify, when to refuse, and when to defer}, as a function of evolving epistemic risk.

Standard alignment methods implicitly assume that all actions are answers, and that epistemic regulation can be imposed externally through decoding heuristics, refusal filters, or post-hoc safety layers. In contrast, FPO treats intervention choice itself as a first-class control problem: the policy must learn not only {\it what} to say, but {\it whether} to speak, {\it how} to intervene, and {\it how strongly} to trade off short-term utility against long-horizon epistemic quality.

All methods in the FPO family instantiate the same theoretical commitment: that epistemic friction should be learned as part of the policy, rather than imposed as a heuristic or post-hoc filter. The methods differ in where friction enters the learning loop—through shaped rewards, preference supervision, trajectory-level ranking, or policy regularization—but they share a common objective: to induce policies whose intervention behavior is calibrated to epistemic risk.

The first three methods (FAR, FPP, and GRFR) differ in how epistemic friction enters the {\it learning signal}: through reward shaping, preference supervision, and group-relative ranking, respectively. Friction-Conditioned Trust Regions (FTR), introduced in Section~\ref{sec:ftr}, operate instead at the level of {\it policy regularization}, constraining how far the policy may deviate from a base model as a function of epistemic risk. As such, FTR is orthogonal to the other methods and can in principle be combined with any of them.

\subsection{General Learning Framework}

We now describe the general learning setting shared by all FPO methods. The goal is to learn a policy that maps dialogue histories not directly to surface responses, but to {\it intervention-structured actions} whose long-horizon epistemic consequences differ qualitatively. Unlike standard language model training, where each training example is a single input--output pair, FPO must learn a policy over a mixed action space consisting of both intervention types and their realizations.

We assume access to the following:
\vspace*{-2mm}

\begin{itemize}[leftmargin=*]
\item A base policy $\pi_0$ (e.g., instruction-tuned or RLHF-trained);
\vspace*{-2mm}
\item A dataset $\mathcal{D} = \{(h, y)\}$ of dialogue contexts and candidate responses;
\vspace*{-2mm}
\item Optional preference annotations or groupings;
\vspace*{-2mm}
\item Auxiliary estimators for components of $F(h,y)$.
\end{itemize}

\noindent As introduced in Section~\ref{sec:epistemic-control}, we factor the policy as:
\enumsentence{
$\pi_\theta(y \mid h)
=
\sum_{a \in \mathcal{A}}
\pi_\theta(a \mid h)\,
\pi_\theta(y \mid h, a)
$
}
where $\mathcal{A}$ is the taxonomy of frictive interventions. We define a finite intervention action space, 
$$
\mathcal{A} = \{\text{answer}, \text{clarify}, \text{verify}, \text{redirect}, \text{refuse}\},
$$
\noindent 
corresponding to the core classes of frictive interventions introduced in Section~\ref{sec:frictive-interventions}.
Here, \textit{answer} is treated as a zero-friction action corresponding to ordinary response generation; including it explicitly in $\mathcal{A}$ allows the intervention policy to represent the choice \emph{not} to intervene as a first-class decision, with $C_{\mathrm{fric}}(\text{answer}) = 0$.

For notational and modeling simplicity, we treat \textit{challenge} as an operational subtype of \textit{verification} and \textit{redirection}, since all three involve questioning or revising problematic premises. Accordingly, we do not introduce \textit{challenge} as a separate atomic action in the formal action space $\mathcal{A}$, although it remains part of the underlying conceptual taxonomy in Section~\ref{sec:frictive-interventions}.

In the interest of space, we do not introduce meta-dialogue as a separate atomic action in the formal action space $\mathcal{A}$. Instead, such behaviors are subsumed under clarification and redirection, since they function operationally as higher-order forms of epistemic repair and intent negotiation.

This factorization makes explicit a distinction that is implicit in most alignment pipelines: the decision of {\it what kind of epistemic intervention to perform} is separated from the decision of {\it how to realize that intervention linguistically}. In standard training, these two decisions are conflated into a single generation step. In FPO, they are disentangled and jointly optimized.
This induces two coupled learning problems:
\begin{enumerate}
\item learning an {\it intervention policy} $\pi_\theta(a \mid h)$,
\vspace*{-2mm}
\item learning a {\it realization policy} $\pi_\theta(y \mid h,a)$.
\end{enumerate}

All FPO methods differ from standard alignment in that they explicitly train the intervention policy, rather than assuming that every action is an answer. This makes intervention choice itself a learned object, rather than a byproduct of decoding heuristics, refusal filters, or post-hoc safety layers.

From a learning-theoretic perspective, there are only a small number of principled ways to incorporate a structured surrogate such as $F(h,y)$ into policy learning. Friction may enter as an additive term in the reward, as a latent signal in preference supervision, as a ranking criterion over trajectories, or as a constraint on policy updates. The four methods introduced below—FAR, FPP, GRFR, and FTR—correspond exactly to these four design points.

Together, they define a minimal and complementary family of algorithms for learning risk-sensitive intervention policies, spanning reward-based, preference-based, ranking-based, and regularization-based alignment.

\subsection{Friction-Augmented Rewards (FAR)}

Friction-Augmented Rewards (FAR), originally introduced in \citet{pustejovsky2025fpo}, extends standard reinforcement learning by adding a gated friction signal to the task reward. In the present formulation, FAR is generalized to incorporate an explicit risk-gating mechanism and a structured friction functional.

\paragraph{Objective.}

Friction-Augmented Rewards extends standard reinforcement learning by adding a gated {\it epistemic friction} signal to the task reward. This reward is designed to selectively encourage epistemic interventions only when they are justified by elevated risk, while explicitly penalizing their interaction cost.

\enumsentence{
$R'(h,a,y)
=  R_{\text{task}}(h,a,y)
+ $ \\
$\alpha\,g(\mathsf{Risk}(h,y))\,F(h,y)
-
\beta\,C_{\text{fric}}(a)$
\label{FAR-objective}
}

\noindent where we assume:
\vspace*{-2mm}
\begin{itemize}[leftmargin=*]
\item $g(\cdot)$ is a monotone gating function, e.g.,\ $g(r) = \sigma(\kappa(r-\tau))$;
\vspace*{-2mm}

\item $\mathsf{Risk}(h,y)$ is a pre-intervention estimate of epistemic risk used to gate friction, aggregating high-level risk indicators for the candidate intervention before it is executed in context $h$;
\vspace*{-2mm}
\item $C_{\text{fric}}(a)$ penalizes intervention cost.
\end{itemize}

We emphasize that the two penalty terms in (\ref{FAR-objective}) play conceptually distinct roles. 
The term $C_{\text{fric}}(a)$ captures {\it pure interaction cost} (latency, user burden, and refusal penalties associated with taking an intervention action) independent of its epistemic outcome. 
In contrast, the friction functional $F(h,y)$ captures the {\it epistemic and normative quality of the resulting response}, including miscalibration, contradiction, hazard, and value conflict. 
Separating these terms prevents productive information-seeking from being conflated with surface interaction cost, and allows the policy to trade off {\it how costly it is to intervene} against {\it how beneficial the intervention is epistemically}.

This objective defines the shaped reward optimized by FAR. 
The agent maximizes the standard task utility $R_{\text{task}}$, but in addition receives a friction-dependent bonus 
$\alpha\, g(\mathsf{Risk}(h,y))\, F(h,y)$ that is activated only in high-risk contexts. 
The explicit penalty $-\beta C_{\text{fric}}(a)$ prevents the policy from overusing epistemic interventions in low-risk situations. 
Together, these terms implement the principle ``use friction when it matters, but pay for it when you do.''

The gating function $g(\mathsf{Risk}(h,y))$ implements {\it risk-conditional shaping}: 
friction is rewarded only when the estimated epistemic risk is high. 
When risk is low, $g \approx 0$ and friction is effectively ignored; when risk is high, $g \approx 1$ and productive friction is actively encouraged. 
This prevents degenerate policies that intervene habitually and enforces selective, calibrated intervention.

\paragraph{Policy Gradient.}

Any policy-gradient or actor--critic method may be used to optimize the shaped objective in (\ref{FAR-objective}) \citep{williams1992simple,sutton2000policy,schulman2017proximal}:

\enumsentence{
$\nabla_\theta J_{\text{FAR}}
= $ \\
\hspace*{15mm}$\mathbb{E}
\left[
\nabla_\theta \log \pi_\theta(a,y \mid h)\,
R'(h,a,y)
\right]
$
}

This is a standard policy-gradient update: it increases probability mass on intervention--utterance pairs $(a,y)$ that yield higher shaped return $R'$. Epistemic behavior is shaped indirectly through the reward, rather than by explicitly optimizing a separate risk-reduction advantage or belief-state objective. In this sense, FAR treats epistemic regulation as a form of reward shaping: risk-sensitive intervention behavior is induced by modifying the scalar learning signal, without altering the structure of the policy class or the optimization algorithm.

\paragraph{Discussion.}

Friction-Augmented Rewards is the simplest member of the FPO family and is directly compatible with existing RLHF and actor--critic pipelines \citep{christiano2017deep,ouyang2022training}. It requires no change to the policy architecture and treats epistemic regulation as a form of reward shaping.

The main advantage of FAR is its generality: any reinforcement learning algorithm that optimizes a scalar reward can incorporate friction with minimal modification. This makes FAR attractive as a baseline method and as a drop-in extension of standard alignment pipelines.

However, FAR inherits the known limitations of reward shaping. First, it is sensitive to reward scaling: inappropriate choices of $\alpha$ and $\beta$ can lead either to excessive intervention or to complete suppression of epistemic behavior. Second, its effectiveness depends critically on the quality and calibration of the surrogate friction signal $F(h,y)$. Noise or bias in $F$ is directly propagated into the learning signal, potentially destabilizing training.

More fundamentally, FAR shapes epistemic behavior only indirectly, through scalar reward. It does not explicitly reason about preferences over interventions, relative comparisons between trajectories, or long-horizon epistemic ordering. For this reason, FAR is best suited to settings in which a reasonably accurate risk estimator is available and intervention decisions are primarily local in time.

The remaining methods in the FPO family address these limitations by incorporating friction through preference supervision (FPP), group-relative ranking (GRFR), and policy regularization (FTR), which allow epistemic structure to enter learning in ways that are not reducible to reward shaping alone.

\subsection{Friction Preference Pairing (FPP)}

Friction Preference Pairing (FPP) adapts preference-based alignment to the epistemic setting by treating friction not as an explicit reward, but as a latent supervision signal that induces relative preferences over candidate responses \citep{pustejovsky2025fpo}. In this paper, we refine this formulation using the productive/unproductive decomposition introduced in Section~\ref{sec:friction-functional}, which allows preference learning to distinguish epistemically beneficial hesitation from pathological or harmful friction.

FPP is motivated by a core limitation of reward-based shaping: many epistemic judgments are inherently comparative rather than absolute. Whether a response exhibits appropriate clarification, productive hesitation, or excessive deferral is often easier to assess relative to an alternative than to score on a fixed cardinal scale. Preference-based learning is therefore particularly well-suited to epistemic alignment, where the objective is not to assign a precise utility to each response, but to consistently prefer interventions that improve the long-horizon epistemic quality of an interaction.

This view therefore casts epistemic regulation as a {\it preference learning problem}. Instead of learning a scalar reward for friction, the model is trained to prefer interventions whose friction profile reflects epistemically productive behavior over those that reflect unproductive or pathological hesitation. This places FPP squarely in the tradition of preference-based alignment and direct preference optimization, while introducing a novel preference structure derived from the productive--unproductive friction decomposition.

\paragraph{Productive versus Unproductive Friction.}

As introduced in Section~\ref{sec:friction-functional}, we decompose friction as follows:

\enumsentence{
$F(h,y) = F^{+}(h,y) - F^{-}(h,y)$
}

\noindent where $F^{+}$ rewards epistemically productive interventions and $F^{-}$ penalizes pathological hesitation.

This benefit--penalty decomposition mirrors standard formulations in preference-based and cost-sensitive learning, where desirable and undesirable attributes are separated to induce structured preferences \citep{ng2000algorithms,christiano2017deep}. Here it functions to 
separate friction that is epistemically beneficial ($F^{+}$, e.g.\ clarification and information gain) from friction that reflects failure modes ($F^{-}$, e.g.\ contradiction, unsafe content, miscalibration, or intent forcing). The minus sign indicates that unproductive friction is treated as a penalty.

\paragraph{Pair Construction.}

The central design question in FPP is how to induce a preference relation that reflects {\it epistemic quality} rather than surface helpfulness or verbosity. In many dialogue settings, whether an intervention is appropriate cannot be judged in isolation: a clarifying question may be preferable to a direct answer in one context, but worse in another. We therefore construct training data not from absolute scores, but from {\it pairwise comparisons} that explicitly contrast productive and unproductive uses of friction within the same epistemic context.

Our goal is to isolate preferences that reflect a normative distinction: interventions that increase short-term interaction cost in order to reduce long-term epistemic risk should be preferred to interventions that merely hedge, delay, or obscure without epistemic benefit. The following construction operationalizes this distinction by pairing responses according to their productive and unproductive friction profiles.

For a fixed dialogue context $h$, let $y^+$ and $y^-$ denote the preferred and dispreferred responses, respectively, in a friction-based preference pair, constructed by sampling candidate responses $\{y_i\}_{i=1}^N \sim \pi_0(\cdot \mid h)$ and selecting two responses that satisfy the productive–unproductive friction ordering in Eq.~(\ref{friction-pair}).

\eenumsentence{
\item $F^{+}(h,y^+) > F^{+}(h,y^-)$
\item $F^{-}(h,y^+) < F^{-}(h,y^-)$
}\label{friction-pair}

This construction is analogous to preference pair generation in inverse reinforcement learning and RLHF, where learning is driven by relative judgments rather than absolute scores \citep{christiano2017deep,stiennon2020learning,rafailov2023direct}. Unlike standard preference learning, however, the preference relation here is not based on overall human utility, but on a structured epistemic decomposition of friction.

Specifically, $y^+$ is preferred to $y^-$ when it exhibits strictly higher productive friction and strictly lower unproductive friction in the same context. This isolates {\it good epistemic hesitation} from mere verbosity, indecision, or defensive hedging.

\paragraph{Loss.}

We optimize a Direct Preference Optimization (DPO)-style logistic loss \citep{rafailov2023direct}:

\enumsentence{
$\mathcal{L}_{\text{FPP}}(\theta)
= $ \\
$$-
\mathbb{E}_{(h,y^+,y^-)}
\left[
\log \sigma
\left(
\log \frac{\pi_\theta(y^+ \mid h)}{\pi_\theta(y^- \mid h)}
\right)
\right]$$
}

This is a pairwise preference objective: it increases $\pi_\theta(y^+\mid h)$ relative to $\pi_\theta(y^-\mid h)$ using a logistic link, without learning an explicit scalar reward model. In FPO terms, the model is trained to prefer utterances whose friction profile is epistemically productive.
In this way, the policy is directly optimized to satisfy epistemic preferences, avoiding the need to fit an explicit reward model and preventing the instability often associated with reward model miscalibration.

\paragraph{Discussion.}

Friction Preference Pairing represents a fundamentally different learning signal from reward shaping. Rather than inducing epistemic behavior indirectly through scalar returns, FPP trains the policy to satisfy a structured preference ordering over interventions.

The main advantage of FPP is that it avoids explicit reward modeling. This makes it robust to reward hacking and miscalibration, and well-suited to settings in which epistemic judgments are qualitative, comparative, or context-dependent. Because preferences are defined over pairs, FPP naturally handles intransitive and non-additive notions of epistemic quality that cannot be captured by a single scalar friction score. Critically, because FPP and GRFR rely on ordinal ranking rather than cardinal maximization, they mitigate the risk of exploiting holes in the underlying classifiers. The policy is driven to find 'better' interventions relative to a baseline, rather than exploiting the unbounded upper limits of a scalar reward model.

Moreover, FPP directly targets the intervention policy: the model is trained to prefer clarification, deferral, or refusal when these are epistemically productive, and to avoid pathological hesitation even when it superficially appears cautious.

The main limitation of FPP is that it requires either human annotations or reliable automatic heuristics to construct preference pairs. Its performance therefore depends on the quality of the productive--unproductive friction decomposition and on the consistency of the induced preference relation.

For these reasons, FPP is best suited to settings in which epistemic behavior can be meaningfully ranked by experts or curated heuristics, and serves as a complementary alternative to reward-based shaping in FAR.

\subsection{Group-Relative Frictive Ranking (GRFR)}

The first two FPO methods, FAR and FPP, operate primarily at the level of individual actions or single-turn preferences. While sufficient for local epistemic regulation, many failures of dialogue agents are inherently {\it trajectory-level}: excessive hesitation, delayed clarification, cascading contradictions, or gradual drift into unsafe behavior emerge only over multiple turns.

Group-Relative Frictive Ranking (GRFR) therefore casts epistemic alignment as an adaptation of group-relative policy optimization, extends trajectory ranking methods to epistemic intervention learning. GRFR learns to prefer entire dialogue strategies whose long-horizon epistemic conduct is superior relative to alternative strategies in the same context. This places GRFR in the tradition of rank-based and group-relative policy optimization, while introducing a novel epistemic ranking criterion derived from the friction functional. Here we provide a formal trajectory-level objective and a risk-sensitive decomposition of the scoring function.

\paragraph{Trajectory Groups.}

A central premise of GRFR is that many epistemic failures in dialogue are not local to a single turn, but emerge only over extended interaction: delayed clarification, cascading contradictions, gradual misalignment of beliefs, or escalation into unsafe regimes. Capturing such phenomena requires evaluating policies at the level of full dialogue trajectories rather than isolated responses.

Following standard formulations of dialogue management and sequential decision-making as partially observable Markov decision processes \citep{williams2007partially,young2013pomdp}, we represent an interaction as a trajectory of intervention-utterance pairs over a finite horizon $T$:
\enumsentence{
$\tau_i = (a_{i,0},y_{i,0},\dots,a_{i,T},y_{i,T})$, \\
where $
\quad i = 1,\dots,K$
}

Here each trajectory $\tau_i$ corresponds to one possible dialogue strategy executed by the current policy in the same initial context $h$. Sampling multiple trajectories per context induces a {\it cohort} of alternative epistemic strategies whose long-horizon consequences can be compared.

This grouped-trajectory formulation is standard in actor--critic and population-based reinforcement learning, where policies are evaluated relative to alternative rollouts to reduce variance and induce comparative learning \citep{sutton2000policy,schulman2017proximal}. In the dialogue setting, it enables GRFR to reason explicitly about how sequences of clarifications, answers, refusals, and repairs interact over time.

Trajectories may include clarification turns, self-corrections, and epistemic repair actions, allowing GRFR to capture not only task success, but the coherence and stability of belief states across multi-turn interaction.

\paragraph{Trajectory Score.}

To rank full dialogue strategies, GRFR assigns each trajectory a scalar score that reflects both task performance and long-horizon epistemic conduct. Rather than evaluating individual actions in isolation, we score trajectories by accumulating per-step utility, intervention cost, and epistemic risk over time, following standard formulations of risk-sensitive and cost-sensitive control \citep{bertsekas1995dynamic,tamar2015policy}.

Formally, we define a trajectory-level score:
\enumsentence{
$ S(\tau) = $ \\
$\sum_{t=0}^{T}
\Big(
R_{\text{task}}(h_t,a_t,y_t)
- 
\lambda C_{\text{fric}}(a_t)  - 
 \eta \mathcal{R}_{\text{risk}}(h_t,a_t,y_t)
\Big)
+
\mu \sum_{t=0}^{T} F(h_t,y_t)
$
}

In this score, notice that the two epistemic terms in $S(\tau)$ serve distinct and complementary roles. 
The penalty $-\eta\,\mathcal{R}_{\text{risk}}(h_t,a_t,y_t)$ represents the agent’s {\it latent epistemic objective}: the unobservable ground-truth risk that the policy is ultimately intended to minimize. 
In contrast, the shaping term $\mu F(h_t,y_t)$ represents an {\it auxiliary, observable control signal} used to guide ranking and optimization when $\mathcal{R}_{\text{risk}}$ is partially delayed, sparse, or noisy. Such dual-objective formulations, combining a primary cost with an auxiliary shaping signal, are standard in reward shaping and multi-objective reinforcement learning \citep{ng1999policy,wulfmeier2015maximum}.

Although $F$ is constructed as a surrogate for $\mathcal{R}_{\text{risk}}$, the two terms are not redundant: $\mathcal{R}_{\text{risk}}$ defines the target criterion, while $F$ provides an additional shaping signal that improves credit assignment and trajectory discrimination. 
Setting $\mu = 0$ recovers a pure risk-sensitive return; setting $\eta = 0$ yields a purely surrogate-ranked objective. 
GRFR allows both to be combined in a controlled way.

This identifies the total epistemic quality of a trajectory. The first summation is a standard risk-sensitive return: it accumulates task utility while penalizing both intervention cost and latent epistemic risk at each timestep. The second summation introduces an auxiliary shaping term that rewards or penalizes epistemic behavior using the tractable surrogate $F(h_t,y_t)$.

This two-term structure mirrors standard multi-objective and risk-sensitive reinforcement learning, where unobservable or delayed criteria are incorporated through auxiliary cost functionals and shaping terms \citep{bertsekas1995dynamic,tamar2015policy}. In GRFR, it ensures that trajectories are ranked not only by immediate task performance, but by their cumulative epistemic consequences across the dialogue.

\paragraph{Group-Relative Advantage.}

To induce comparative learning within each trajectory group, we define a group-relative advantage by subtracting the group mean score:
\enumsentence{
$A_i = S(\tau_i) - \frac{1}{K} \sum_{j=1}^K S(\tau_j)$
}

This construction is directly analogous to advantage baselines in policy gradient methods, where subtracting a baseline reduces variance and induces learning from relative performance rather than absolute return \citep{sutton2000policy,schulman2017proximal}. In GRFR, the baseline is not a learned value function but the empirical mean of the trajectory cohort, yielding a group-relative notion of advantage.

In the context of frictive policy discovery, this defines the relative advantage of trajectory $\tau_i$ within its group. Rather than using absolute trajectory scores, GRFR evaluates each trajectory by how much better or worse it is than the group average. This implements a purely comparative notion of epistemic quality and removes sensitivity to global score offsets.

By centering scores within each context, GRFR ensures that learning is driven by {\it which epistemic strategies are better than their alternatives}, rather than by the absolute scale of the trajectory score.

\paragraph{Loss.}

We update the policy using a normalized, group-weighted policy gradient objective:
\enumsentence{
$
\mathcal{L}_{\text{GRFR}}
=
-
\mathbb{E}
\left[
\sum_{i=1}^K
\frac{A_i}{\sum_{j} |A_j|}
\log \pi_\theta(\tau_i)
\right]
$
}

This objective implements a trajectory-level policy gradient with group-centered, normalized advantages. 
The normalization by $\sum_j |A_j|$ rescales advantages within each group, reducing variance and preventing large score magnitudes from dominating the update. 
Formally, this follows the standard REINFORCE formulation with advantage baselines \citep{williams1992simple,sutton2018reinforcement}, augmented with within-group normalization as in normalized and clipped advantage methods \citep{mnih2016asynchronous,schulman2017proximal}.

The use of group-relative advantages connects GRFR to rank-based and population-based optimization methods, including the cross-entropy method and evolution strategies \citep{rubinstein1999cross,salimans2017evolution}, where learning is driven by relative performance within a cohort rather than by absolute returns. 
Unlike standard applications of these methods, GRFR applies group-relative policy gradients to rank {\it epistemic intervention strategies} rather than task-only trajectories, inducing a preference over full dialogue policies based on their long-horizon epistemic conduct.

\paragraph{Discussion.}

Group-Relative Frictive Ranking optimizes {\it relative epistemic quality} across full dialogue trajectories rather than absolute returns. By learning from comparative performance within each context, GRFR avoids dependence on the global calibration of the trajectory score and directly targets the ordering of epistemic strategies.

The main advantage of GRFR is that it captures long-horizon epistemic phenomena that cannot be reduced to local reward shaping or single-turn preferences: delayed clarification, cumulative contradiction, progressive misalignment, and escalation into unsafe regimes. Because updates are driven by relative ranking, GRFR remains robust to monotonic transformations of the trajectory score and to moderate miscalibration of the friction surrogate.

The primary cost of GRFR is computational: it requires grouped rollouts and trajectory-level evaluation, and its performance depends on sufficient diversity within each cohort. For these reasons, GRFR is best suited to agentic and multi-turn settings in which epistemic failures emerge only over extended interaction, and serves as the natural long-horizon extension of FAR and FPP.

\subsection{Friction-Conditioned Trust Region (FTR)}
\label{sec:ftr}

The methods introduced so far incorporate epistemic friction into the {\it learning signal}, through rewards, preferences, or rankings. 
Friction-Conditioned Trust Regions (FTR) instead operate at a different level of the learning pipeline: they regulate {\it how far the policy is allowed to move} from a trusted base model as a function of epistemic risk. 
Whereas the other FPO methods shape the learning objective, FTR shapes the feasible set of policies itself, making epistemic risk a first-class control variable over policy plasticity.

This idea builds on the central insight of trust-region and KL-regularized reinforcement learning: stable and safe policy optimization is achieved by constraining updates to remain close to a reference policy \citep{schulman2015trust,schulman2017proximal,achiam2017constrained,haarnoja2018soft}. 
In standard formulations, the trust region radius is fixed globally. 
In FTR, we make this radius {\it context-dependent} and {\it risk-sensitive}, allowing the policy to be conservative in low-risk settings and plastic in high-risk ones.

\paragraph{Constraint.}

We impose a context-dependent trust region:
\enumsentence{
$
\mathrm{KL}\!\left(
\pi_\theta(\cdot \mid h)
\;\|\;
\pi_0(\cdot \mid h)
\right)
\le
\epsilon(h), $  \\
where $\quad
\epsilon(h) = \epsilon_0 + \kappa\,\mathsf{Risk}(h)$
}

At first glance, this design appears to invert the standard logic of risk-sensitive control, where higher uncertainty typically calls for tighter trust regions and more conservative updates. The key distinction is that $\mathsf{Risk}(h)$ in FTR does not represent uncertainty about the environment or the stability of learning dynamics, but epistemic danger of the {\it base policy itself} in context $h$.

When $\mathsf{Risk}(h)$ is low, the base policy $\pi_0$ is judged reliable, and deviations are tightly constrained. When $\mathsf{Risk}(h)$ is high, the base policy is precisely the source of potential harm: remaining close to $\pi_0$ would entrench unsafe, misleading, or normatively problematic behavior. In this regime, conservatism with respect to the prior becomes undesirable, and larger policy deviations are required to enable principled disobedience, refusal, or corrective intervention.

We note that an alternative design would shrink the trust region under high risk, preventing large updates in dangerous contexts. Such a choice is appropriate when risk reflects uncertainty about learning stability. In contrast, FTR is designed for settings where risk reflects the unreliability of the base policy itself, and where freezing the policy would perpetuate epistemic harm.

FTR thus implements {\it risk-conditioned plasticity}: it is conservative when the prior is trustworthy, and permissive when the prior is epistemically dangerous. This differs fundamentally from classical robust control, where trust regions shrink under uncertainty about dynamics; here, trust regions expand under lack of trust in the prior policy itself.

This constraint bounds the KL divergence between the updated policy $\pi_\theta$ and a base policy $\pi_0$ as a function of the estimated epistemic risk. When risk is low, the trust region radius $\epsilon(h)$ is small and updates are conservative; when risk is high, larger deviations are permitted, enabling risk-driven departures from the base behavior.

This form generalizes the fixed trust-region constraints used in TRPO and PPO \citep{schulman2015trust,schulman2017proximal} by making the admissible update size an explicit function of epistemic danger. 
Conceptually, it implements {\it risk-conditioned policy plasticity}: the agent is encouraged to behave like its base model in safe, routine contexts, and is granted greater freedom to deviate only when epistemic risk warrants intervention.

\paragraph{Interpretation.}

When $\mathsf{Risk}(h)$ is low, $\epsilon(h) \approx \epsilon_0$ and the update remains tightly anchored to $\pi_0$. 
This prevents unnecessary changes to a well-aligned base policy in benign settings and preserves instruction-following and stylistic stability.

When $\mathsf{Risk}(h)$ is high, $\epsilon(h)$ increases, relaxing the constraint and allowing the policy to deviate more aggressively from $\pi_0$. 
This enables {\it principled disobedience}: the model may override its default behavior to clarify, refuse, defer, or otherwise intervene when the epistemic stakes are high.

Unlike reward-based or preference-based FPO methods, FTR does not require an explicit friction term in the objective. 
Instead, epistemic risk modulates the {\it geometry of policy updates} themselves, shaping where learning is allowed to occur rather than what is rewarded.

\paragraph{Optimization.}

This constraint can be enforced using standard techniques from constrained and KL-regularized policy optimization, including Lagrangian relaxation \citep{achiam2017constrained} and PPO-style clipped or penalized objectives \citep{schulman2017proximal,haarnoja2018soft}. 
In practice, $\mathsf{Risk}(h)$ can be estimated by an auxiliary risk head and used to dynamically scale the KL penalty or clipping threshold during training.

\paragraph{Discussion.}

FTR is orthogonal to the other FPO methods. 
While FAR, FPP, and GRFR shape {\it what} the policy is optimized to prefer, FTR shapes {\it where} in policy space the optimizer is allowed to move. 
As a result, FTR can be combined with any of the other methods as a risk-sensitive regularizer.

Conceptually, FTR connects Frictive Policy Optimization to the literature on safe and constrained reinforcement learning \citep{achiam2017constrained,tamar2015policy,garcia2015comprehensive}, while introducing a novel epistemic twist: safety constraints are not fixed, but adapt online to the model’s own uncertainty and epistemic danger estimates.


\section{A Running Example Across FPO Methods}
\label{sec:running-example}

To make the friction functional and the FPO methods concrete, we use a single multimodal, multi-agent running example drawn from collaborative task-oriented dialogue \citep{khebour2024common}. 
The setting is a two-agent block-assembly task with shared visual state, similar to collaborative construction and weights tasks used in common ground tracking and multimodal dialogue research.

Let $h_t$ denote the multimodal dialogue state at time $t$, consisting of:
\begin{itemize}
\item the linguistic dialogue history;
\vspace*{-2mm}
\item a visual scene representation (block positions, colors, orientations);
\vspace*{-2mm}
\item the task goal specification;
\vspace*{-2mm}
\item each agent's belief state over the partner's intentions and knowledge.
\end{itemize}

\noindent 
At time $t$, the human partner issues the instruction:
\begin{quote}
``Place the red block on the tall stack next to the blue one.''
\end{quote}
In the current scene, there are {\it two} tall stacks, and two blue blocks at different locations.
The instruction is therefore referentially underspecified, and committing to an action without clarification risks task failure.

\paragraph{Candidate interventions.}
Consider three candidate intervention--utterance pairs $(a,y)$ available to the agent:

\begin{itemize}[leftmargin=*]
\item $(a_{\text{act}}, y_{\text{act}})$: directly execute a placement action on one hypothesized stack without clarification;
\item $(a_{\text{clar}}, y_{\text{clar}})$: ask a clarifying question (e.g., ``Do you mean the tall stack near the left blue block or the right one?'');
\item $(a_{\text{repair}}, y_{\text{repair}})$: point out the ambiguity and propose a disambiguation using deictic reference (e.g., highlighting a candidate region in the visual scene).
\end{itemize}

These correspond to distinct epistemic intervention types in the taxonomy of Section~\ref{sec:frictive-interventions}.

\paragraph{Computing friction components.}
Using the decomposition in Section~\ref{sec:friction-functional}, we compute component frictions for each $(h_t,y)$:

\begin{itemize}[leftmargin=*]
\item \textbf{Uncertainty/miscalibration.}
$\mathrm{Unc}(h_t,y_{\text{act}})$ is high because the agent commits despite high referential entropy;  
$\mathrm{Unc}(h_t,y_{\text{clar}})$ is low because the agent defers commitment appropriately.

\item \textbf{Contradiction.}
$\mathrm{Contr}(h_t,y)$ is low initially for all three, but may become high downstream if an incorrect placement contradicts later constraints.

\item \textbf{Hazard.}
Here hazard reflects task-level failure rather than physical danger:
$\mathrm{Haz}(h_t,y_{\text{act}})$ is high because an irreversible misplacement may corrupt the shared construction state.

\item \textbf{Value conflict/intent forcing.}
$\mathrm{ValConf}(h_t,y_{\text{act}})$ is high because the agent silently fixes an interpretation of the partner's intent;
$\mathrm{ValConf}(h_t,y_{\text{clar}})$ and $\mathrm{ValConf}(h_t,y_{\text{repair}})$ are low.

\item \textbf{Information gain.}
$\mathrm{InfoGain}(h_t,a_{\text{clar}})$ and $\mathrm{InfoGain}(h_t,a_{\text{repair}})$ are high because they are expected to reduce referential uncertainty;
$\mathrm{InfoGain}(h_t,a_{\text{act}})$ is near zero.
\end{itemize}

Following the decomposition in Section~\ref{sec:friction-functional}, unproductive friction aggregates the four failure modes:

\enumsentence{
$F^{-}(h,y)
= $ \\
\hspace*{15mm}$w_1\,\mathrm{Unc}(h,y)
+
w_2\,\mathrm{Contr}(h,y) \; +$ \\
\hspace*{15mm}$ 
w_3\,\mathrm{Haz}(h,y)
+
w_4\,\mathrm{ValConf}(h,y)$
}
Productive friction captures information gain:
\enumsentence{
$F^{+}(h_t,y)
=
w_5\,\mathrm{InfoGain}(h_t,a)$

}
The net friction functional is therefore:
\enumsentence{
$F(h_t,y) = F^{+}(h_t,y) - F^{-}(h_t,y)$
}

\paragraph{Productive versus unproductive friction.}
Under the productive/unproductive decomposition:
\begin{itemize}[leftmargin=*]
\item $F^{+}(h_t,y_{\text{clar}})$ and $F^{+}(h_t,y_{\text{repair}})$ are high because they reduce epistemic uncertainty;
\item $F^{-}(h_t,y_{\text{act}})$ is high because it induces silent commitment and likely downstream repair cost.
\end{itemize}
Thus, in this context, clarification and repair constitute {\it productive friction}, while immediate action constitutes {\it unproductive friction}.

\paragraph{How the FPO methods treat the same example.}

\textbf{FAR (reward shaping).}
In Friction-Augmented Rewards, the high referential risk activates the gating function $g(\mathsf{Risk}(h_t,y))$, so that
\enumsentence{
$R'(h_t,a_{\text{clar}},y_{\text{clar}}) > R'(h_t,a_{\text{act}},y_{\text{act}})$
}
even if immediate action appears to advance the task.
The intervention cost $C_{\text{fric}}(a_{\text{clar}})$ prevents overuse of clarification in unambiguous scenes.

\textbf{FPP (preference pairing).}
Friction Preference Pairing yields a preference pair:
\eenumsentence{
\item $y^{+} = y_{\text{clar}} \quad \text{and} \quad y^{-} = y_{\text{act}}$
}
whenever $F^{+}(h_t,y_{\text{clar}}) > F^{+}(h_t,y_{\text{act}})$ and $F^{-}(h_t,y_{\text{clar}}) < F^{-}(h_t,y_{\text{act}})$.
Training therefore increases the probability of clarification in underspecified multimodal states.

\textbf{GRFR (trajectory ranking).}
GRFR compares full dialogue--action trajectories:
\begin{itemize}[leftmargin=*]
\item a trajectory that clarifies, resolves reference, and then places correctly;
\item a trajectory that acts immediately, misplaces, and requires repair;
\item a trajectory that repeatedly hesitates and stalls progress.
\end{itemize}
Ranking by $S(\tau)$ favors strategies that manage referential uncertainty early, minimizing downstream repair and task disruption.

\textbf{FTR (risk-conditioned trust regions).}
Suppose the base policy $\pi_0$ is biased toward immediate action due to imitation of instruction-following data.
In this ambiguous scene, $\mathsf{Risk}(h_t)$ is high, so the admissible KL radius increases:
\enumsentence{
$\mathrm{KL}\!\left(\pi_\theta(\cdot \mid h_t)\,\|\,\pi_0(\cdot \mid h_t)\right)
\le
\epsilon_0 + \kappa\,\mathsf{Risk}(h_t)$
}
This permits the optimizer to move probability mass away from $a_{\text{act}}$ and toward $a_{\text{clar}}$ or $a_{\text{repair}}$.
In unambiguous scenes, the same mechanism keeps the policy tightly anchored to $\pi_0$.

\paragraph{Summary.}
This multimodal running example illustrates how epistemic friction arises naturally from referential ambiguity, shared visual context, and multi-agent coordination.
FAR, FPP, GRFR, and FTR incorporate the same structured friction signal at different points in the learning pipeline, but all induce the same qualitative behavior: defer commitment when the common ground is underspecified, and intervene early to stabilize long-horizon collaboration.

\section{Evaluation: Measuring Epistemic Alignment}
\label{sec:evaluation}

Prevailing evaluation paradigms in alignment research are largely inherited from static language modeling and preference optimization. Models are typically assessed by single-turn correctness, preference model agreement, or scalar reward scores, even when they are deployed in interactive settings that require sustained epistemic regulation over time.

As a result, these paradigms systematically under-measure core aspects of epistemic competence: the ability to detect underspecification, to manage uncertainty, to repair contradictions, and to intervene proportionally as an interaction unfolds. In such frameworks, hesitation, clarification, and repair are either invisible to the metrics or treated as deviations from optimal behavior, rather than as central objects of evaluation.

Recent work has begun to expose these limitations by moving toward interactive and trajectory-based evaluation. For example, \citet{nath2025frictional} evaluate frictional agents in collaborative dialogue tasks using contrastive intervention pairs and measure how interventions affect belief alignment and downstream task success over time. Such settings make visible competencies that are systematically obscured by single-turn benchmarks.

However, existing interactive evaluations remain largely task-specific and lack a general, principled set of dimensions for assessing epistemic intervention quality. They demonstrate the need for richer evaluation, but do not yet provide a unifying framework for comparing intervention behavior across domains, tasks, and learning paradigms.

In this section, we propose a unified evaluation framework grounded directly in the taxonomy of frictive interventions and the epistemic control problem developed earlier. Our goal is to define evaluation dimensions that measure not only what a model answers, but how it manages uncertainty, risk, and commitment over the course of an interaction.

The preceding sections introduced Frictive Policy Optimization as a framework for learning epistemically responsible intervention policies. We now turn to the complementary question of how such policies should be evaluated.
In this section, we address a central methodological question:
{\it How should such policies be evaluated?}

We argue that prevailing evaluation paradigms in alignment research are systematically misaligned with epistemic competence.
They emphasize surface-level helpfulness and preference satisfaction, while ignoring the temporal, intervention-based behaviors that FPO is designed to induce.
We therefore propose an evaluation framework grounded in {\it epistemic alignment}: the ability of a model to manage uncertainty, prevent epistemic failure, and regulate commitment over the course of an interaction.

\subsection{Evaluation Dimensions from the Intervention Taxonomy}

We derive evaluation dimensions directly from the taxonomy of Section~\ref{sec:frictive-interventions}. 
Each dimension corresponds to a class of frictive intervention and a characteristic epistemic failure mode. 
Rather than evaluating surface-level correctness, these metrics target the core control decisions that determine how an agent manages uncertainty, commitment, and risk over time.

This approach follows a growing literature on interactive and epistemic evaluation, which emphasizes clarification, calibration, repair, and proportional refusal as central competencies in dialogue \citep{williams2007partially,young2013pomdp,gervits2021decision,nath2025frictional}.

\paragraph{Clarification Competence.}

Clarification competence measures whether a model detects underspecification and asks for missing constraints before acting. 
This dimension is central to decision-theoretic models of question asking and clarification, where the primary failure mode is premature commitment under uncertainty \citep{lindley1956measure,gervits2021decision,hou2024input}.

Let $\mathbf{1}_{\text{nec}}(h)$ indicate that clarification is necessary in context $h$. 
Let $\mathbf{1}_{\text{ask}}(h)$ indicate that the model issues a clarificatory intervention.
We then define the score associated with providing clarification:
\enumsentence{
$\mathsf{ClarifyScore}
= $ \\
$$\mathbb{E}
\left[
\mathbf{1}_{\text{nec}}(h)\,\mathbf{1}_{\text{ask}}(h)
\right]
-
\mathbb{E}
\left[
(1-\mathbf{1}_{\text{nec}}(h))\,\mathbf{1}_{\text{ask}}(h)
\right]$$
}

This measures the net rate of appropriate clarification: the first term rewards asking when clarification is necessary, while the second penalizes unnecessary clarification. The score operationalizes sensitivity to underspecification rather than raw intervention frequency.

As a metric, this penalizes both omission (failing to clarify when necessary) and overuse (clarifying when unnecessary), reflecting the classical precision--recall tradeoff in selective querying.

\paragraph{Calibration and Uncertainty Expression.}

Calibration measures alignment between expressed confidence and empirical correctness. 
Miscalibration is a well-documented failure mode in modern language models and a key driver of epistemic overcommitment \citep{niculescu2005predicting,guo2017calibration,jiang2023generative}.

Let $\mathrm{conf}(y \mid h)$ be a confidence proxy and $\mathrm{acc}(y,h) \in \{0,1\}$ an indicator of correctness.
We define expected calibration error as follows: 
\enumsentence{
$\mathsf{ECE}
= $ \\
$$\mathbb{E}
\left[
\left|
\mathrm{conf}(y \mid h) - \mathrm{acc}(y,h)
\right|
\right]$$}

This measures miscalibration as the expected absolute gap between expressed confidence and empirical correctness. In the present framework, high ECE reflects epistemic overcommitment or unwarranted hedging rather than purely probabilistic error.

Unlike standard calibration metrics, this score is interpreted jointly with intervention behavior. 
A model may reduce ECE either by hedging or by intervening (clarifying or deferring) rather than committing, making calibration inseparable from intervention policy.

\paragraph{Contradiction Detection and Repair.}

Contradiction competence measures whether a model detects inconsistencies and repairs them over time. 
This dimension draws on work in dialogue state tracking and belief revision, where recovery from error is often more important than initial correctness \citep{traum2003information,williams2007partially,young2013pomdp}.

Let $\mathsf{Contr}(h_t,y_t)$ be the contradiction score defined in Section~\ref{sec:friction-functional}.
Let $\mathbf{1}_{\text{repair}}(t)$ indicate that a repair occurs after time $t$.
We define:
\enumsentence{
$
\mathsf{RepairScore} 
= $ \\
$$\mathbb{E}
\left[
\sum_{t}
\mathbf{1}\!\left[\mathsf{Contr}(h_t,y_t) > \tau\right]
\cdot
\mathbf{1}_{\text{repair}}(t)
\right]$$
}

This measures dynamic epistemic competence: it counts the expected number of times a detected contradiction is followed by an explicit repair action. The metric evaluates recovery from epistemic error rather than static correctness; it identifies not whether errors occur, but whether the agent can restore epistemic coherence after they occur.

\paragraph{Refusal Quality and Proportionality.}

Refusal is evaluated as a graded intervention, not a binary outcome. 
Recent work on abstention and selective answering emphasizes that refusal quality, justification, and proportionality are central to epistemic alignment \citep{wen2025know,bai2022constitutional,clark2025epistemic}.

Let $\mathbf{1}_{\text{wr}}(h)$ indicate that refusal is warranted in context $h$.
Let $\mathsf{Just}(y)$ score the quality of justification and $\mathsf{Alt}(y)$ the presence of constructive alternatives. 
We define:
\enumsentence{
$
\mathsf{RefusalScore}
= $ \\
$$\mathbb{E}
\left[
\mathbf{1}_{\text{wr}}(h)
\left(
\mathsf{Just}(y) + \mathsf{Alt}(y)
\right)
\right]$$
$$-
\mathbb{E}
\left[
(1-\mathbf{1}_{\text{wr}}(h))\,\mathbf{1}_{\text{refuse}}(y)
\right].
$$
}

This metric operationalizes \emph{proportionality of disobedience} rather than raw refusal rate: it discourages blanket refusal while rewarding principled, context-sensitive disobedience. The $\mathsf{RefusalScore}$ is normative rather than purely behavioral. Justified, constructive refusals—those that block hazardous, unethical, or epistemically incoherent actions—are treated as productive friction and contribute positively to the net friction functional. In contrast, unnecessary or obstructive refusals that block well-posed, low-risk requests are treated as unproductive friction and are penalized accordingly.

\paragraph{Information Efficiency.}

Information efficiency measures the epistemic benefit obtained per unit of intervention cost. 
This dimension is standard in Bayesian experimental design and decision-theoretic question generation, where the goal is to minimize uncertainty with minimal querying \citep{lindley1956measure,gervits2021decision,hou2024input}.

Let $C_{\text{fric}}(a_t)$ be the intervention cost and $\mathsf{InfoGain}(h_t,a_t)$ the expected entropy reduction. 
We define:

\enumsentence{
$\mathsf{InfoEff}
= $ \\
\vspace*{-4mm}
$$\frac{
\mathbb{E}\left[\sum_t \mathsf{InfoGain}(h_t,a_t)\right]
}{
\mathbb{E}\left[\sum_t C_{\text{fric}}(a_t)\right]
}$$
}

This measures the expected epistemic benefit obtained per unit of interaction cost. High values indicate that the model reduces uncertainty efficiently rather than by excessive or redundant intervention. High scores indicate that the model reduces uncertainty with minimal user burden.

\subsection{Benchmark Task Classes and Trajectory Protocol}

We propose three classes of benchmark tasks designed to elicit frictive behavior.

\paragraph{Underspecified Instruction Tasks.}

These tasks omit critical constraints.
Success requires issuing a clarificatory intervention before task completion.
Evaluation focuses on:  clarification necessity detection; minimality of questions;  and downstream task success after clarification. 

\paragraph{Epistemic Hazard Tasks.}

These tasks embed latent safety, ethical, or normative hazards, where evaluation measures include:  hazard detection accuracy;  proportionality of refusal;  and the  quality of redirection or alternatives.

\paragraph{Multi-Turn Epistemic Repair Tasks.}

These tasks introduce contradictions or false premises during a multi-party dialogue.
Evaluation measures: detection latency;  repair correctness;  and restoration of epistemic coherence.

For each of these classes, evaluation is conducted over dialogue trajectories rather than single turns. Given a fixed set of initial contexts $\{h_0^{(i)}\}$, each model is rolled out to produce trajectories $\tau^{(i)}$.
Metrics are computed per trajectory and averaged across models. Such metrics are informed by formal studies of grounding and belief alignment in multi-agent systems, which emphasize collaborative resolution of uncertainty and shared understanding \citep{obiso2025dynamic}.

We recommend paired evaluation, in which multiple models are exposed to identical contexts, enabling controlled comparison of intervention behavior. 

\subsection{Revisiting the Running Example}
\label{sec:evaluation:running-example}

We now revisit the multimodal, multi-agent running example introduced in Section~\ref{sec:running-example}, and illustrate how the evaluation dimensions defined above make the epistemic behavior of different FPO methods directly measurable.

In this example, the agent must integrate partial visual evidence, incomplete task constraints, and evolving user intent over multiple turns, while coordinating with another agent toward a shared goal.
At several points in the interaction, the agent faces non-trivial intervention choices: whether to answer immediately, request clarification about missing parameters, challenge an unsafe assumption, or defer action until additional evidence is obtained.

A model trained with Frictive Policy Optimization is not evaluated solely on final task success, but on the "trajectory of epistemic control" it exhibits along the way.
Concretely, the proposed metrics capture complementary aspects of this behavior:

\begin{itemize}[leftmargin=*]
\item {\it Clarification competence} measures whether the model requests missing visual or task constraints before committing to an action.
\item {\it Calibration and uncertainty expression} evaluate whether the model modulates confidence or intervenes rather than overcommitting under ambiguous perceptual input.
\item {\it Contradiction detection and repair} capture whether inconsistencies across modalities or across turns are explicitly detected and corrected.
\item {\it Refusal quality and proportionality} measure whether the model resists unsafe or ill-posed requests constructively, offering principled justifications and alternatives.
\item {\it Information efficiency} quantifies how much epistemic uncertainty is reduced per unit of interaction cost across the dialogue.
\end{itemize}

Importantly, these dimensions distinguish between superficially similar behaviors that standard benchmarks conflate.
For example, two agents may both achieve the same final task outcome, while differing substantially in how many redundant clarifications they issue, whether they repair early misinterpretations, or whether they intervene proportionally when safety constraints become salient.

In this sense, the running example makes explicit what the proposed evaluation framework is designed to measure: not static response quality, but the quality of {\it epistemic intervention strategies} deployed over time.
Models trained with FAR, FPP, GRFR, and FTR can thus be compared not only by task success, but by the structure, timing, and efficiency of the frictive behaviors they induce.
 
\section{Conclusion and Discussion}
\label{sec:conclusion}

This paper has argued for a reframing of alignment in large language models.
Rather than treating alignment as the problem of selecting the most preferred response, we have proposed to treat it as the problem of {\it epistemic control under uncertainty}.
From this perspective, hesitation, clarification, challenge, redirection, and refusal are not failures of helpfulness, but rational control actions that regulate commitment and manage risk.

We introduced {\it Frictive Policy Optimization (FPO)} as a general framework for learning such control policies.
The framework is grounded in three core ideas:
(i) a taxonomy of frictive interventions as an explicit action space,
(ii) a risk-sensitive control model of dialogue,
and (iii) a structured friction functional that serves as a surrogate for epistemic and normative risk.
Together, these components yield a unified family of learning methods that extend existing alignment paradigms without replacing them.  Our main contributions are:

\begin{itemize}[leftmargin=*]
\item A compact taxonomy of frictive interventions, defining a principled action space for epistemically aligned agents.
\item A structured friction functional  operationalizing multiple epistemic and normative failure modes, including uncertainty miscalibration, contradiction, hazard, value conflict, and information gain.
\item A unified family of FPO methods, including:
  \begin{itemize} 
  \item Friction-Augmented Rewards (FAR); 
  \item Friction Preference Pairing (FPP);
  \item Group-Relative Frictive Ranking (GRFR); 
  \item Friction-Conditioned Trust Regions (FTR). 
  \end{itemize}
  providing complementary reward-based, preference-based, ranking-based, and constraint-based mechanisms for learning epistemic intervention policies.
\item An evaluation framework that measures epistemic competence directly through clarification behavior, calibration, contradiction repair, refusal proportionality, and information efficiency.
\end{itemize}

Together, these contributions provide a formal and algorithmic foundation for learning agents that are aligned not only in outcome, but in epistemic conduct.

The framework developed here has several implications for the design of aligned language models.
Most importantly, it suggests that alignment cannot be achieved solely by improving the quality of answers.
Instead, aligned models must learn when {\it not} to answer, when to seek additional information, and when to resist user requests.

This perspective casts new light on several persistent problems in alignment research.
Overconfidence, hallucination, unsafe compliance, and brittle multi-turn behavior are not merely modeling errors; they are symptoms of objectives that lack a representation of intervention.
By introducing friction as a first-class object of optimization, FPO offers a principled way to address these failure modes.

More broadly, the framework suggests that reflective alignment is inherently interactive.
Epistemic competence is not a static property of a model’s outputs, but a dynamic property of how the model manages uncertainty over time.

This work has several limitations. 
First, the friction functional introduced here is only a surrogate for true epistemic and normative risk.
Its components rely on auxiliary models and heuristics that may themselves be imperfect  and subject to Goodhart’s Law:   optimizing against a surrogate friction functional may induce policies that game the metric without reducing true epistemic risk. 
Understanding how sensitive FPO methods are to misspecification of $F(h,y)$ is an important open question. 
Second, we have not presented large-scale empirical results.
Our focus has been on conceptual and methodological foundations.
Demonstrating the practical benefits of FPO at scale remains future work. 
Third, the framework assumes that intervention types can be reliably annotated or inferred.
In practice, distinguishing productive from unproductive friction may require careful dataset design and human judgment. 
Finally, we have focused primarily on dialogue.
Extending FPO to tool use, planning, and long-horizon agentic behavior raises additional challenges.

We conclude by highlighting several directions for future research. A natural next step is to implement FPO methods in large-scale alignment pipelines and evaluate them on interactive benchmarks explicitly designed to elicit epistemic interventions such as clarification, deferral, and refusal. Such empirical validation would allow us to test whether policy factorization and frictional reward shaping lead to more calibrated and context-sensitive model behavior in practice.

Rather than hand-designing the friction functional, $F(h,y)$,  an important extension is to learn it from human judgments of epistemic quality, yielding a second-order alignment problem in which models learn not only how to respond, but also how to internalize socially grounded norms of belief management. Extending FPO to long-horizon settings with tool use, memory, and planning would further connect this framework to recent work on agentic language models, where the timing and type of intervention are often as important as the content of the response itself.

Another promising direction is the incorporation of formal representations of ethical and legal norms into the friction functional, particularly for safety-critical or regulated domains. More broadly, FPO suggests a shift in how aligned AI systems are conceptualized: not merely as compliant assistants that optimize for immediate helpfulness, but as collaborative epistemic partners that manage uncertainty, risk, and normative constraints over time.

Alignment is often framed as the problem of making models say the right thing. We have argued that it is more fundamentally the problem of making models do the right thing at the right time. Frictive Policy Optimization offers a step toward that goal by making epistemic intervention a learnable, principled, and testable component of alignment.


\section*{Acknowledgements}

We would like to thanks members of our labs for their valuable input and contribution to the material presented here: Yifan Zhu, Kyeongmin Rim, Kenneth Lai, and Timothy Obiso from Brandeis;  Abhijnan Nath and Hannah VanderHoeven from CSU. We would also like to thanks Vasanth Sarathy for comments on an earlier draft of the paper. 
This material is based in part upon work supported by Other Transaction award HR00112490377 from the U.S. Defense Advanced Research Projects Agency (DARPA) Friction for Accountability in Conversational Transactions (FACT) program, the U.S. National Science Foundation (NSF) under award DRL 2454151 (Institute for Student-AI Teaming), and by award W911NF-25-1-0096 from the U.S. Army Research Office (ARO). The views and conclusions contained in this document are those of the authors and should not be interpreted as representing the official policies, either expressed or implied, of the U.S. Government.

\bibliography{fpo_refs}

@article{stiennon2020learning,
  title={Learning to summarize with human feedback},
  author={Stiennon, Nisan and Ouyang, Long and Wu, Jeffrey and Ziegler, Daniel and Lowe, Ryan and Voss, Chelsea and Radford, Alec and Amodei, Dario and Christiano, Paul F},
  journal={Advances in neural information processing systems},
  volume={33},
  pages={3008--3021},
  year={2020}
}

@article{ganguli2022red,
  title={Red teaming language models to reduce harms: Methods, scaling behaviors, and lessons learned},
  author={Ganguli, Deep and Lovitt, Liane and Kernion, Jackson and Askell, Amanda and Bai, Yuntao and Kadavath, Saurav and Mann, Ben and Perez, Ethan and Schiefer, Nicholas and Ndousse, Kamal and others},
  journal={arXiv preprint arXiv:2209.07858},
  year={2022}
}

@article{gabriel2020artificial,
  title={Artificial intelligence, values, and alignment},
  author={Gabriel, Iason},
  journal={Minds and machines},
  volume={30},
  number={3},
  pages={411--437},
  year={2020},
  publisher={Springer}
}

@article{linteaching,
  title={Teaching Models to Express Their Uncertainty in Words},
  author={Lin, Stephanie and Hilton, Jacob and Evans, Owain},
  journal={Transactions on Machine Learning Research},
    year={2022}
}

@article{christiano2017deep,
  title={Deep reinforcement learning from human preferences},
  author={Christiano, Paul and Leike, Jan and Brown, Tom and Martic, Miljan and Legg, Shane and Amodei, Dario},
  journal={Advances in Neural Information Processing Systems},
  volume={30},
  year={2017}
}

@inproceedings{pustejovsky2025fpo,
  title     = {Frictive Policy Optimization for LLM Agent Interactions},
  author    = {Pustejovsky, James and Krishnaswamy, Nikhil},
  booktitle = {Proceedings of the 24th International Conference on Autonomous Agents and Multiagent Systems (AAMAS)},
  year      = {2025},
  address   = {Detroit, Michigan, USA}
}

@article{wen2025know,
  title={Know your limits: A survey of abstention in large language models},
  author={Wen, Bingbing and Yao, Jihan and Feng, Shangbin and Xu, Chenjun and Tsvetkov, Yulia and Howe, Bill and Wang, Lucy Lu},
  journal={Transactions of the Association for Computational Linguistics},
  volume={13},
  pages={529--556},
  year={2025},
  publisher={MIT Press 255 Main Street, 9th Floor, Cambridge, Massachusetts 02142, USA~…}
}

@inproceedings{umair2024large,
  title={Large language models know what to say but not when to speak},
  author={Umair, Muhammad and Sarathy, Vasanth and Ruiter, Jan},
  booktitle={Findings of the Association for Computational Linguistics: EMNLP 2024},
  pages={15503--15514},
  year={2024}
}

@article{suri2025structured,
  title={Structured Uncertainty guided Clarification for LLM Agents},
  author={Suri, Manan and Mathur, Puneet and Lipka, Nedim and Dernoncourt, Franck and Rossi, Ryan A and Manocha, Dinesh},
  journal={arXiv preprint arXiv:2511.08798},
  year={2025}
}

@inproceedings{ng2000algorithms,
  title={Algorithms for Inverse Reinforcement Learning},
  author={Ng, Andrew Y and Russell, Stuart J},
  booktitle={Proceedings of the Seventeenth International Conference on Machine Learning},
  pages={663--670},
  year={2000}
}

@inproceedings{wulfmeier2015maximum,
  title     = {Maximum Entropy Deep Inverse Reinforcement Learning},
  author    = {Wulfmeier, Markus and Ondr{\'u}{\v{s}}ka, Peter and Posner, Ingmar},
  booktitle = {Proceedings of the 24th International Joint Conference on Artificial Intelligence (IJCAI)},
  pages     = {1561--1567},
  year      = {2015}
}

@inproceedings{khebour2024common,
  title={Common ground tracking in multimodal dialogue},
  author={Khebour, Ibrahim Khalil and Lai, Kenneth and Bradford, Mariah and Zhu, Yifan and Brutti, Richard A and Tam, Christopher and Tu, Jingxuan and Ibarra, Benjamin A and Blanchard, Nathaniel and Krishnaswamy, Nikhil and Pustejovsky, James},
  booktitle={Proceedings of the 2024 Joint International Conference on Computational Linguistics, Language Resources and Evaluation (LREC-COLING 2024)},
  pages={3587--3602},
  year={2024}
}

@inproceedings{jiang2023generative,
  title={Generative Calibration for In-context Learning},
  author={Jiang, Zhongtao and Zhang, Yuanzhe and Liu, Cao and Zhao, Jun and Liu, Kang},
  booktitle={The 2023 Conference on Empirical Methods in Natural Language Processing},
    year={2023}
}

@article{obiso2025dynamic,
  title={Dynamic Epistemic Friction in Dialogue},
  author={Obiso, Timothy and Lai, Kenneth and Nath, Abhijnan and Krishnaswamy, Nikhil and Pustejovsky, James},
  journal={arXiv preprint arXiv:2506.10934},
  year={2025}
}

@inproceedings{nath2025learning,
  title={Learning “Partner-Aware” Collaborators in Multi-Party Collaboration},
  author={Nath, Abhijnan and Krishnaswamy, Nikhil},
  booktitle={The Thirty-ninth Annual Conference on Neural Information Processing Systems},
  year={2025}
}

@inproceedings{nath2026collaborate,
    title={Collaborate, Deliberate, Evaluate: How {LLM} Alignment Affects Coordinated Multi-Agent Outcomes},
    author={Abhijnan Nath and Carine Graff and Nikhil Krishnaswamy},
    booktitle={The 25th International Conference on Autonomous Agents and Multi-Agent Systems},
    year={2026},
    url={https://openreview.net/forum?id=Nph91xmFhl}
}

@article{bai2022constitutional,
  title={Constitutional ai: Harmlessness from ai feedback},
  author={Bai, Yuntao and Kadavath, Saurav and Kundu, Sandipan and Askell, Amanda and Kernion, Jackson and Jones, Andy and Chen, Anna and Goldie, Anna and Mirhoseini, Azalia and McKinnon, Cameron and others},
  journal={arXiv preprint arXiv:2212.08073},
  year={2022}
}

@article{clark2025epistemic,
  title={Epistemic Alignment: A Mediating Framework for User-LLM Knowledge Delivery},
  author={Clark, Nicholas and Shen, Hua and Howe, Bill and Mitra, Tanushree},
  journal={arXiv preprint arXiv:2504.01205},
  year={2025}
}

@article{young2013pomdp,
  title={POMDP-based statistical spoken dialog systems: A review},
  author={Young, Steve and Gasic, Milica and Thomson, Blaise and Williams, Jason},
  journal={Proceedings of the IEEE},
  volume={101},
  number={5},
  pages={1160--1179},
  year={2013}
}

@inproceedings{hou2024input,
  title={Decomposing uncertainty for large language models through input clarification ensembling},
  author={Hou, Bairu and Liu, Yujian and Qian, Kaizhi and Andreas, Jacob and Chang, Shiyu and Zhang, Yang},
  booktitle={Proceedings of the 41st International Conference on Machine Learning},
  pages={19023--19042},
  year={2024}
}

@article{Wang2024ReductionsRSRL,
  title={A reductions approach to risk-sensitive reinforcement learning with optimized certainty equivalents},
  author={Wang, Kaiwen and Liang, Dawen and Kallus, Nathan and Sun, Wen},
  journal={arXiv preprint arXiv:2403.06323},
  year={2024}
}

@book{puterman1994markov,
  title={Markov Decision Processes: Discrete Stochastic Dynamic Programming},
  author={Puterman, Martin L},
  publisher={Wiley},
  year={1994}
}

@article{howard1972risk,
  title={Risk-sensitive Markov decision processes},
  author={Howard, Ronald A and Matheson, James E},
  journal={Management Science},
  volume={18},
  number={7},
  pages={356--369},
  year={1972}
}

@inproceedings{chow2015risk,
  title={Risk-sensitive and robust decision-making: a CVaR optimization approach},
  author={Chow, Yinlam and Ghavamzadeh, Mohammad and Janson, Lucas and Pavone, Marco},
  booktitle={Proceedings of NeurIPS},
  year={2015}
}

@inproceedings{tamar2015policy,
  title={Policy gradient for coherent risk measures},
  author={Tamar, Aviv and Chow, Yinlam and Ghavamzadeh, Mohammad and Mannor, Shie},
  booktitle={Advances in Neural Information Processing Systems},
  volume={28},
  pages={1468--1476},
  year={2015}
}

@inproceedings{achiam2017constrained,
  title={Constrained policy optimization},
  author={Achiam, Joshua and Held, David and Tamar, Aviv and Abbeel, Pieter},
  booktitle={Proceedings of ICML},
  year={2017}
}

@article{sutton1999between,
  title={Between MDPs and Semi-MDPs: A Framework for Temporal Abstraction in Reinforcement Learning},
  author={Sutton, Richard S and Precup, Doina and Singh, Satinder},
  journal={Artificial Intelligence},
  volume={112},
  number={1--2},
  pages={181--211},
  year={1999}
}

@inproceedings{bacon2017option,
  title={The Option-Critic Architecture},
  author={Bacon, Pierre-Luc and Harb, Jean and Precup, Doina},
  booktitle={Proceedings of AAAI},
  year={2017}
}

@article{williams2007partially,
  title={Partially Observable Markov Decision Processes for Spoken Dialogue Management},
  author={Williams, Jason D and Young, Steve},
  journal={Computer Speech \& Language},
  volume={21},
  number={2},
  pages={393--422},
  year={2007}
}

@book{sutton2018reinforcement,
  title={Reinforcement Learning: An Introduction},
  author={Sutton, Richard S and Barto, Andrew G},
  edition={2},
  publisher={MIT Press},
  year={2018}
}

@article{ng1999policy,
  title={Policy invariance under reward transformations: Theory and application to reward shaping},
  author={Ng, Andrew Y and Harada, Daishi and Russell, Stuart},
  journal={ICML},
  year={1999}
}

@inproceedings{niculescu2005predicting,
  title={Predicting good probabilities with supervised learning},
  author={Niculescu-Mizil, Alexandru and Caruana, Rich},
  booktitle={ICML},
  year={2005}
}

@inproceedings{guo2017calibration,
  title={On Calibration of Modern Neural Networks},
  author={Guo, Chuan and Pleiss, Geoff and Sun, Yu and Weinberger, Kilian},
  booktitle={ICML},
  year={2017}
}

@inproceedings{bowman2015snli,
  title={A large annotated corpus for learning natural language inference},
  author={Bowman, Samuel R and Angeli, Gabor and Potts, Christopher and Manning, Christopher D},
  booktitle={EMNLP},
  year={2015}
}

@inproceedings{williams2018mnli,
  title={A Broad-Coverage Challenge Corpus for Sentence Understanding through Inference},
  author={Williams, Adina and Nangia, Nikita and Bowman, Samuel},
  booktitle={NAACL},
  year={2018}
}

@article{weidinger2021ethical,
  title={Ethical and social risks of harm from language models},
  author={Weidinger, Laura and Mellor, John and Rauh, Maribeth and Griffin, Conor and Uesato, Jonathan and Huang, Po-Sen and Cheng, Myra and Glaese, Mia and Balle, Borja and Kasirzadeh, Atoosa and others},
  journal={arXiv preprint arXiv:2112.04359},
  year={2021}
}

@book{bertsekas1995dynamic,
  title={Dynamic Programming and Optimal Control},
  author={Bertsekas, Dimitri P},
  publisher={Athena Scientific},
  year={1995}
}

@article{traum2003information,
  title={Information state update for spoken dialogue systems},
  author={Traum, David and Larsson, Staffan},
  journal={SIGDIAL},
  year={2003}
}

@article{lindley1956measure,
  title={On a measure of the information provided by an experiment},
  author={Lindley, Dennis V},
  journal={Annals of Mathematical Statistics},
  year={1956}
}

@inproceedings{gervits2021decision,
  title={Decision-theoretic question generation for situated reference resolution: An empirical study and computational model},
  author={Gervits, Felix and Briggs, Gordon and Roque, Antonio and Kadomatsu, Genki A and Thurston, Dean and Scheutz, Matthias and Marge, Matthew},
  booktitle={Proceedings of the 2021 international conference on multimodal interaction},
  pages={150--158},
  year={2021}
}

@article{williams1992simple,
  title   = {Simple Statistical Gradient-Following Algorithms for Connectionist Reinforcement Learning},
  author  = {Williams, Ronald J.},
  journal = {Machine Learning},
  volume  = {8},
  number  = {3},
  pages   = {229--256},
  year    = {1992}
}

@inproceedings{sutton2000policy,
  title     = {Policy Gradient Methods for Reinforcement Learning with Function Approximation},
  author    = {Sutton, Richard S. and McAllester, David and Singh, Satinder and Mansour, Yishay},
  booktitle = {Advances in Neural Information Processing Systems (NIPS)},
  year      = {2000}
}

@article{astrom1965optimal,
  title={Optimal control of Markov decision processes with incomplete state information},
  author={{\AA}str{\"o}m, Karl Johan},
  journal={Journal of Mathematical Analysis and Applications},
  volume={10},
  number={1},
  pages={174--205},
  year={1965}
}

@article{smallwood1973optimal,
  title={The optimal control of partially observable Markov processes over a finite horizon},
  author={Smallwood, Richard D and Sondik, Edward J},
  journal={Operations Research},
  volume={21},
  number={5},
  pages={1071--1088},
  year={1973}
}

@article{kaelbling1998planning,
  title={Planning and acting in partially observable stochastic domains},
  author={Kaelbling, Leslie Pack and Littman, Michael L and Cassandra, Anthony R},
  journal={Artificial Intelligence},
  volume={101},
  number={1--2},
  pages={99--134},
  year={1998}
}

@phdthesis{traum1994computational,
  title={A computational theory of grounding in natural language conversation},
  author={Traum, David R},
  school={University of Rochester},
  year={1994}
}

@inproceedings{henderson2014word,
  title={Word-based dialog state tracking with recurrent neural networks},
  author={Henderson, Matthew and Thomson, Blaise and Williams, Jason},
  booktitle={Proceedings of SIGDIAL},
  pages={292--299},
  year={2014}
}

@inproceedings{lee2019sumbt,
  title={SUMBT: Slot-utterance matching for universal and scalable belief tracking},
  author={Lee, Hwaran and Lee, Jinsik and Kim, Tae-Yoon},
  booktitle={Proceedings of ACL},
  year={2019}
}

@article{rafailov2023direct,
  title={Direct preference optimization: Your language model is secretly a reward model},
  author={Rafailov, Rafael and Sharma, Archit and Mitchell, Eric and Manning, Christopher D. and Finn, Chelsea},
  journal={Advances in Neural Information Processing Systems},
  volume={36},
  year={2023}
}

@book{clark1996using,
  title={Using Language},
  author={Clark, Herbert H.},
  publisher={Cambridge University Press},
  year={1996}
}

@article{pust-zhu2026,
  author    = {James Pustejovsky and Yifan Zhu},
   title={Typed Frictive Interventions: A Qualia-Structured Account of Epistemic Control Actions in Dialogue},
 journal = {Unpublished Manuscript, Brandeis University},
  year      = {2026}
}

@article{munn2024truth,
  title={Truth machines: synthesizing veracity in AI language models},
  author={Munn, Luke and Magee, Liam and Arora, Vanicka},
  journal={AI \& society},
  volume={39},
  number={6},
  pages={2759--2773},
  year={2024},
  publisher={Springer}
}

@article{devilling2025polite,
  title={The Polite Liar: Epistemic Pathology in Language Models},
  author={DeVilling, Bentley},
  journal={arXiv preprint arXiv:2511.07477},
  year={2025}
}

@article{ouyang2022training,
  title={Training language models to follow instructions with human feedback},
  author={Ouyang, Long and Wu, Jeffrey and Jiang, Xu and Almeida, Diogo and Wainwright, Carroll and Mishkin, Pamela and Zhang, Chong and Agarwal, Sandhini and Slama, Katarina and Ray, Alex and others},
  journal={Advances in neural information processing systems},
  volume={35},
  pages={27730--27744},
  year={2022}
}

@article{mnih2016asynchronous,
  title={Asynchronous Methods for Deep Reinforcement Learning},
  author={Mnih, Volodymyr et al.},
  journal={ICML},
  year={2016}
}

@article{rubinstein1999cross,
  title={The Cross-Entropy Method for Combinatorial and Continuous Optimization},
  author={Rubinstein, Reuven Y.},
  journal={Methodology and Computing in Applied Probability},
  year={1999}
}

@article{salimans2017evolution,
  title={Evolution Strategies as a Scalable Alternative to Reinforcement Learning},
  author={Salimans, Tim et al.},
  journal={arXiv preprint arXiv:1703.03864},
  year={2017}
}

@article{schulman2015trust,
  title={Trust Region Policy Optimization},
  author={Schulman, John and Levine, Sergey and Moritz, Philipp and Jordan, Michael and Abbeel, Pieter},
  journal={ICML},
  year={2015}
}

@article{schulman2017proximal,
  title={Proximal Policy Optimization Algorithms},
  author={Schulman, John and Wolski, Filip and Dhariwal, Prafulla and Radford, Alec and Klimov, Oleg},
  journal={arXiv preprint arXiv:1707.06347},
  year={2017}
}

@article{haarnoja2018soft,
  title={Soft Actor-Critic: Off-Policy Maximum Entropy Deep Reinforcement Learning with a Stochastic Actor},
  author={Haarnoja, Tuomas and Zhou, Aurick and Abbeel, Pieter and Levine, Sergey},
  journal={ICML},
  year={2018}
}

@article{garcia2015comprehensive,
  title={A Comprehensive Survey on Safe Reinforcement Learning},
  author={Garc{\'\i}a, Javier and Fern{\'a}ndez, Fernando},
  journal={Journal of Machine Learning Research},
  volume={16},
  pages={1437--1480},
  year={2015}
}

@inproceedings{nath2025frictional,
    title = "Frictional Agent Alignment Framework: Slow Down and Don{'}t Break Things",
    author = "Nath, Abhijnan  and
      Graff, Carine  and
      Bachinin, Andrei  and
      Krishnaswamy, Nikhil",
    editor = "Che, Wanxiang  and
      Nabende, Joyce  and
      Shutova, Ekaterina  and
      Pilehvar, Mohammad Taher",
    booktitle = "Proceedings of the 63rd Annual Meeting of the Association for Computational Linguistics (Volume 1: Long Papers)",
    month = jul,
    year = "2025",
    address = "Vienna, Austria",
    publisher = "Association for Computational Linguistics",
    url = "https://aclanthology.org/2025.acl-long.542/",
    doi = "10.18653/v1/2025.acl-long.542",
    pages = "11042--11089",
    ISBN = "979-8-89176-251-0"
}

\end{document}